\newcolumntype{d}[1]{D..{#1}}
\newcolumntype{C}{>{\centering\arraybackslash}X}
\theoremstyle{plain}
\newtheorem{thm}{Theorem}
\newtheorem{lemma}{Lemma}
\newtheorem{definition}{Definition}
\theoremstyle{definition}
\newtheorem{prob}{Problem}
\newtheorem{assumption}{Assumption}
\newtheorem{remark}{Remark}
\newcommand{\reals}{\mathbb{R}}
\newcommand{\R}{\reals}
\newcommand{\Rnonneg}{\reals_{\geq 0}}
\newcommand{\Rplus}{\reals_{>0}}
\newcommand{\nonnegintegers}{\mathbb{Z}_0}
\newcommand{\posintegers}{\mathbb{Z}_+}
\newcommand{\naturals}{\mathbb{N}}
\newcommand{\Acal}{\mathcal{A}}
\newcommand{\Bcal}{\mathcal{B}}
\newcommand{\Ncal}{\mathcal{N}}
\newcommand{\Pcal}{\mathcal{P}}
\newcommand{\Rcal}{\mathcal{R}}
\newcommand{\Tcal}{\mathcal{T}}
\newcommand{\Ucal}{\mathcal{U}}
\newcommand{\Xcal}{\mathcal{X}}
\newcommand{\Zcal}{\mathcal{Z}}
\newcommand{\eqn}[1]{\begin{align} #1 \end{align}}
\newcommand{\eqnN}[1]{\begin{align*} #1 \end{align*}}
\newcommand{\norm}[1]{\left\Vert #1 \right \Vert}
\definecolor{mygreen}{RGB}{200, 255, 200}
\definecolor{myred}{RGB}{255, 200, 200}
\definecolor{mypink}{RGB}{255, 220, 220}
\definecolor{yellow}{cmyk}{0.0,0.10,0.95,0.0}
\definecolor{pred}{cmyk}{0,0.8,0.70,0.0}
\definecolor{bluedefined}{cmyk}{0.46, 0.10, 0, 0.0}
\newcommand{\genTISD}{\textbf{\texttt{genTISD}}}
\newcommand{\mEclares}{\textbf{\texttt{mEclares}}}
\newcommand{\eware}{\textbf{\texttt{eware}}}
\newcommand{\mesch}{\textbf{\texttt{meSch}}}
\newcommand{\Rmesch}{\textbf{\texttt{RmeSch}}}
\newcommand{\Robustmesch}{\textbf{\texttt{Robust-meSch}}}
\newcommand{\gware}{\textbf{\texttt{gware}}}
\algrenewcommand\textproc{}
\algrenewcommand\algorithmicrequire{\textbf{Input:}}
\algrenewcommand\algorithmicensure{\textbf{Output:}}
\begin{document}

\title[Article Title]{Adaptive Ergodic Search with Energy-Aware Scheduling for Persistent Multi-Robot Missions}

\author*[1]{\fnm{Kaleb Ben} \sur{Naveed}}\email{kbnaveed@umich.edu}
\author[1]{\fnm{Devansh R.} \sur{Agrawal}}\email{devansh@umich.edu}
\author[1]{\fnm{Rahul} \sur{Kumar}}\email{rahulhk@umich.edu}
\author[1,2]{\fnm{Dimitra} \sur{Panagou}}\email{dpanagou@umich.edu}

\affil*[1]{\orgdiv{Department of Robotics}, \orgname{University of Michigan}, \city{Ann Arbor}, \postcode{48109}, \state{MI}, \country{USA}}
\affil[2]{\orgdiv{Department of Aerospace Engineering}, \orgname{University of Michigan}, \city{Ann Arbor}, \postcode{48109}, \state{MI}, \country{USA}}

\abstract{
Autonomous robots are increasingly deployed for long-term information-gathering tasks, which pose two key challenges: planning informative trajectories in environments that evolve across space and time, and ensuring persistent operation under energy constraints. This paper presents a unified framework, \mEclares{}, that addresses both challenges through adaptive ergodic search and energy-aware scheduling in multi-robot systems. Our contributions are two-fold: (1) we model real-world variability using stochastic spatiotemporal environments, where the underlying information evolves unpredictably due to process uncertainty. To guide exploration, we construct a target information spatial distribution (TISD) based on clarity, a metric that captures the decay of information in the absence of observations and highlights regions of high uncertainty; and (2) we introduce \Robustmesch{} (\Rmesch{}), an online scheduling method that enables persistent operation by coordinating rechargeable robots sharing a single mobile charging station. Unlike prior work, our approach avoids reliance on preplanned schedules, static or dedicated charging stations, and simplified robot dynamics. Instead, the scheduler supports general nonlinear models, accounts for uncertainty in the estimated position of the charging station, and handles central node failures. The proposed framework is validated through real-world hardware experiments, and feasibility guarantees are provided under specific assumptions.\\
\href{https://github.com/kalebbennaveed/mEclares-main.git}{[Code: https://github.com/kalebbennaveed/mEclares-main.git]}\\
\href{https://www.youtube.com/watch?v=dmaZDvxJgF8}{[Experiment Video: https://www.youtube.com/watch?v=dmaZDvxJgF8]}
}

\keywords{Informative path planning, Energy-aware planning, Ergodic search, Multi-agent coordination}

\maketitle

\section{Introduction}\label{sec1}
Autonomous robots are increasingly deployed in missions requiring long-term data acquisition, such as environmental monitoring~(\cite{water_exploration1, water_exploration2}), ocean current characterization~(\cite{gulf1, gulf2}), wildfire surveillance~(\cite{julian2019distributed}), and search-and-rescue operations~(\cite{search_rescue1, search_rescue2}). Planning informative trajectories for such missions poses two key challenges: (i) designing robot trajectories that maximize information acquisition in dynamic environments, and (ii) ensuring task persistence under energy constraints by enabling timely recharging.

\subsection{Adaptive Informative Path Planning}

The first challenge involves adaptive planning in spatiotemporal environments—where quantities of interest (e.g., temperature, wind speed, gas concentration) evolve across space and time. Informative path planning (IPP) addresses this by generating robot paths that maximize information gain or minimize uncertainty, subject to resource constraints. Classical approaches include orienteering-based formulations~(\cite{bottarelli2019orienteering}), submodular optimization methods~(\cite{meliou2007nonmyopic}), and Gaussian Process (GP)-based planners~(\cite{Chen-RSS-22}). To improve adaptability and scalability in high-dimensional settings, sampling-based methods~(\cite{moon2025ia}) and receding horizon strategies~(\cite{sun2017no}) have been proposed. However, many struggle to adapt in real-time to variations in the environment.

Ergodic search offers an alternative by generating trajectories that match the time-averaged visitation frequency with a target information spatial distribution (TISD), instead of choosing discrete sensing points. Prior work~(\cite{Mezic_Ergodic, Dressel_Ergodic, abraham2021ergodic, cmu_ergodic, dong2023time}) has demonstrated its value in achieving spatially balanced exploration. However, these methods often assume \textit{spatiostatic environments}(\cite{Mezic_Ergodic, dong2023time}) or rely on known spatiotemporal dynamics(\cite{Dressel_Ergodic, multiobjectiveergodic, Candela_thesis}), limiting applicability in real-world scenarios where uncertainty arises from model mismatch, disturbances, or environmental variability.

To address this, we consider \textit{stochastic spatiotemporal environments}—environments whose evolution is uncertain in both space and time. In such cases, information can decay without continued measurement, motivating online trajectory planning that prioritizes regions with high uncertainty and rapid information loss. We build on the clarity metric, proposed by \cite{clarity}, a bounded information measure between $[0,1]$ that captures both current knowledge and its decay due to lack of observation. Using clarity, we construct a principled TISD that continuously evolves based on the robot's measurement history and environmental uncertainty, allowing robots to adaptively revisit regions where uncertainty is increasing.

\subsection{Task Persistence in Multi-Agent Systems}
\label{sec:related_work_task_persistence}

The second challenge is persistent operation under energy constraints, particularly when multiple robots must coordinate recharging through a shared charging resource. Prior work on task persistence spans both single-agent and multi-agent scenarios involving static and mobile charging infrastructure.

For static stations, some methods assume a dedicated charger per robot~(\cite{dedicated_01, dedicated_02, dedicated_03, multirobot_game_theoretic}), while others support shared chargers with concurrent access~(\cite{singall_for_all_1, periodic_charging}). When fewer chargers than robots are available~(\cite{static_charging_bipartite, static_placing_charging, seewald2024energyaware}), strategies include modifying mission paths~(\cite{static_charging_bipartite}), placing stations strategically~(\cite{static_placing_charging}), or constraining charging frequency~(\cite{seewald2024energyaware}). Closest to our work are~\cite{bentz2018complete, persis_Fouad}: the former staggers robot deployments to ensure exclusivity, while the latter employs control barrier functions (CBFs)~\cite{CBF_TAC} to enforce minimum SoC levels under simplified single-integrator dynamics.

Most mobile charging approaches assume a dedicated charging robot, with coordination either via precomputed rendezvous points~(\cite{pre-plan_1, pre-plan_2}) or continuous communication~(\cite{continous_comms_1}). Others dynamically intercept robots during their mission~(\cite{cooperative_1, cooperative_2, MOBILE_RAL_2022}). In contrast, we consider a shared mobile charging station that travels alongside the robot network to extend operational time. Our method does not rely on preplanned rendezvous or continuous communication and supports general nonlinear robot dynamics.

\subsection{Contributions}

This work presents a unified framework for adaptive ergodic search and energy-aware scheduling in persistent multi-robot missions. Our key contributions, situated in the context of existing state-of-the-art methods, are:

\begin{itemize}
\item \textbf{Principled multi-agent TISD construction via clarity:}
Unlike prior ergodic methods that assume static~(\cite{ dong2023time}) or known spatiotemporal dynamics~(\cite{Dressel_Ergodic, multiobjectiveergodic, Candela_thesis}), we construct the target information spatial distribution (TISD) using the clarity metric~(\cite{clarity}), a bounded measure that quantifies information decay and the maximum attainable information in stochastic spatiotemporal environments. This allows robots to adaptively focus sensing effort in regions with high uncertainty and rapid information loss. 

\item \textbf{Robust energy-aware scheduling with fail-safe coordination:}  
Unlike prior work that achieves exclusivity through staggered deployment~(\cite{bentz2018complete}) or relies on simplified single-integrator dynamics with fixed SoC thresholds~(\cite{persis_Fouad}), we propose \Robustmesch{} (\Rmesch{}), a centralized online scheduling framework that supports general nonlinear robot dynamics, enforces exclusive access to a shared mobile charging station, and guarantees safe returns through a decentralized fail-safe planner that accounts for communication delays and central node failures. Furthermore, we provide formal feasibility guarantees and derive conditions under which robots can be safely added to or removed from the mission without violating energy and return-gap constraints.

\item \textbf{Hardware-validated multi-agent coordination:}  
We validate the proposed method on a heterogeneous team comprising multiple aerial robots and a mobile ground-based charging station through extensive hardware experiments. 
\end{itemize}

\textbf{\textit{Comparison to our own earlier works:}} Compared to our earlier conference papers, this work introduces several key extensions:
\begin{itemize}
\item Compared to~\cite{naveed2024eclares}, we extend the clarity-based information model to the multi-agent case, enabling distributed sensing and coordination.
\item Compared to~\cite{naveed2024mesch}, we introduce a fail-safe planner that enables safe recovery under central node failures and provide a more comprehensive theoretical analysis, including formal guarantees on feasibility and robustness.
\item In addition, this paper presents an expanded experimental evaluation compared to both prior works, including real-world demonstrations involving multiple aerial robots coordinating through a shared mobile charging station.
\end{itemize}

\section{Preliminaries}
\subsection{Notation}
Let $\nonnegintegers = \{ 0, 1, 2, ... \}$ and $\posintegers = \{1, 2, 3, ... \}$. Let $\mathbb{R}$, $\mathbb{R}_{\geq 0}$, $\mathbb{R}_{> 0}$ be the set of reals, non-negative reals, and positive reals respectively. Let $\mathbb{S}^{n}_{++}$ denote set of symmetric positive-definite matrices in $\mathbb{R}^{n \times n}$. Let $\mathcal{N}(\mu, \Sigma)$ denote a normal distribution with mean $\mu$ and covariance $\Sigma \in \mathbb{S}^{n}_{++}$. The $Q \in \mathbb{S}^{n}_{++}$, norm of a vector $x \in \R^n$ is denoted $\norm{x}_Q = \sqrt{x^T Q x}.$ The space of continuous functions $f : \Acal \rightarrow \Bcal$ is denoted as $C(\Acal, \Bcal)$. 

\subsection{System Description}
Consider a multi-agent system, in which each robotic system $i \in \Rcal = \{1, \cdots, N \}$, referred to as a \textbf{\textit{rechargeable robot}}, comprises the robot and battery discharge dynamics: 
\begin{equation}
\label{eqn:rechargeable}
     \Dot{\chi}^i =
    \begin{bmatrix}
        \Dot{x}^i \\
        \Dot{e}^i 
    \end{bmatrix}
    =
    f^i(\chi^i, u^i) \\
    =
    \begin{bmatrix}
        f_r^i(x^i, u^i) \\
        f_e^i(e^i)
    \end{bmatrix},
\end{equation}
where $N = |\Rcal|$ is the cardinality of the set $\Rcal$, $\chi^i = \begin{bmatrix}{x^i}^T, & e^i\end{bmatrix}^T \in \Zcal^i_r \subset \R^{n+1}$ is the $i^{th}$ robotic system state consisting of the robot state $x^i \in \mathcal{X}^i_r \subset \mathbb{R}^{n}$ and its State-of-Charge (SoC) $e^i \in \mathbb{R}_{\geq 0}$. $u^i \in \Ucal^i_r \subset \mathbb{R}^{m}$ is the control input, $f^i: \Zcal^i_r \times \Ucal^i_r \rightarrow \mathbb{R}^{n+1}$ defines the continuous-time robotic system dynamics, $f_r^i: \Xcal^i_r \times \Ucal^i_r \rightarrow \mathbb{R}^n$ define robot dynamics and $f_e^i: \mathbb{R}_{\geq 0} \to  \mathbb{R}$ define worst-case battery discharge dynamics. We also consider the continuous-time dynamics of the mobile charging station (referred to as \textbf{\textit{mobile charging robot}}):
\begin{subequations}
\label{eqn:mobile_charging}
\eqn{
    \Dot{x}^c &= f_c(x^c, u^c) + w(t),   &&w(t) \sim \mathcal{N} (0, W(t)), \label{eqn: charge_dynamics} \\
    y^c &= z(x^c) + v(t), &&v(t) \sim \mathcal{N} (0, V(t)), \label{eqn:obs_nl_model}
}    
\end{subequations}
where $x^c \in \mathcal{X}_c \subset \mathbb{R}^{c}$ is the charging station state, $u^c \in \mathcal{U}_c \subset \mathbb{R}^{s}$ is the charging station control input, $f_c: \Xcal_c \times \Ucal_c \rightarrow \mathbb{R}^{c}$ defines the continuous-time system dynamics for the mobile charging, $w(t)$ is the time-varying process noise with zero mean and known variance $W(t) \in \mathbb{R}_{\geq 0} $, $y^c \in \mathbb{R}^{c}$ is the measurement, $z : \mathbb{R}^{c} \to \mathbb{R}^{c}$ is the observation model, and $v(t)$ is the time-varying measurement noise with zero mean, and known covariance $V(t)$.

\subsection{Ergodic Search}\label{sec:prelim_ergodic}

Ergodic search~(\cite{Mezic_Ergodic, Dressel_Ergodic}) is a technique to generate trajectories $x: [t_0, T] \to \mathcal{X}$ that cover a rectangular domain  $\mathcal{P} = [0, L_1] \times \cdots [0, L_s] \subset \mathbb{R}^s$, matching a specified \emph{target information spatial distribution} (TISD) $\phi : \Pcal \to \R$, where $s$ is the dimensionality of the environment and $\phi(p)$ is the density at $p \in \mathcal{P}$. 
Moreover, the spatial distribution of the trajectory $x(t)$ is defined as
\begin{equation}
    c(x(t), p) = \frac{1}{T-t_0}\int_{t_0}^{T} \delta(p - \Psi(x(\tau))) d\tau
\end{equation}
where $\delta: \Pcal \to \R$ is the Dirac delta function and $\Psi: \Xcal \to \Pcal$ is a mapping such that $\Psi(x(\tau))$ is the position of the robot at time $\tau \in [t_0, T]$. In other words, given a trajectory $x(t)$, $c(x(t), p)$ represents the fraction of time the robot spends at a point $p \in \Pcal$ over the interval $[t_0, T]$. Then, the \emph{ergodicity} of $x(t)$ w.r.t to a TISD $\phi$ is
\eqn{
\Phi(x(t), \phi) = \norm{ c - \phi}_{H^{-(s+1)/2}}
}
where $\norm{\cdot}_{H^{-(s+1)/2}}$ is the Sobolev space norm defined in~\cite{Mezic_Ergodic}, i.e., $\Phi$ is a function space norm measuring the difference between the TISD $\phi$ and the spatial distribution of the trajectory $c$. Given the ergodic metric, ergodic trajectories for a team of $N$  robots can be computed by solving the following optimization problem over the space of trajectories  $x^i(t) \in C([t_0, T], \Xcal)$ and control inputs  $u^i(t) \in C([t_0, T], \Ucal)$ for each robot  $i \in \Rcal$:
\begin{equation}
    \label{eq:multiagent_ergodic_optimization}
    \begin{aligned}
    \min_{\{x^i(t), u^i(t)\}} \quad & \Phi(\{x^1(t), \cdots, x^N(t)\}; \phi)  \\ 
        &+ \sum_{i=1}^N \int_{t_0}^{T} \norm{u^i(\tau)}^2 d\tau \\
    \textrm{s.t.} \quad & \dot{x}^i = f_r(x^i, u^i), \quad \forall i \in \Rcal \\
    & x^i(t_0) = x^i_0 \\
    & \norm{x^i(t) - x^j(t)} \geq d_{\min}, \quad \forall i \neq j
    \end{aligned}
\end{equation}
where $x^i_0$ is the initial state of robot $i$, and $d_{\min}$ is the minimum safety distance to ensure inter-robot collision avoidance. The multi-agent ergodic metric $\Phi(\{x^1(t), \cdots, x^N(t)\}; \phi)$ quantifies the team's collective coverage of the target distribution $\phi$. It is typically computed via a Fourier decomposition of both the empirical visitation statistics and the target distribution~(\cite{Dressel_Ergodic}). This optimization problem can be solved using gradient-based methods. In this work, we do not focus on a specific trajectory optimization method, but rather on the principled construction of the TISD for guiding ergodic exploration in stochastic spatiotemporal environments.


\subsection{Clarity}

We use Clarity~\cite{clarity}, an information measure that defines the quality of information about the variable of interest on a $[0,1]$ scale. Let $X$ be an $n$-dimensional continuous random variable with a density function $\rho(x)$. Its differential entropy is given as follows:
\eqn{
h[X] = -\int_S \rho(x)\log \rho(x) dx
}
where $S$ is the support of $X$. Clarity of $X$, derived from differential entropy, is defined as follows:
\begin{definition} The Clarity $q[X] \in [0,1]$ is defined as:
\eqn{
q[X] = \biggl( 1 + \frac{e^{2h[X]}}{(2\pi e)^n} \biggr)^{-1}
}
\end{definition}

$q \rightarrow 1$ represents the case when $X$ is perfectly known, whereas lower values correspond to higher uncertainty.

Consider a stochastic variable (quantity of interest) $m\in \R$ governed by the process and output (measurement) models:
\begin{subequations}
\eqn{
    \Dot{m} &= w(t),   &&w(t) \sim \Ncal (0, Q) \label{eqn: quantity_of_interest_point}\\ 
    y &= C(x)h+ v(t),  &&v(t) \sim \Ncal (0, R)   \label{eqn: quantity_of_interest_measurement_point}
}
\end{subequations}
where $Q \in \Rnonneg$ is the known variance associated with the process noise, $y \in \R$ is the measurement, $C: \Xcal \to \R$ is the mapping between robot state and sensor state, and $R \in \R$ is the known variance of the measurement noise.

Clarity $q$ of the random quantity $m$, which lies between $[0,1]$ and is defined such that $q = 0$ represents $m$ being unknown, and $q = 1$ corresponds to $m$ being completely known. The clarity dynamics for the subsystem \eqref{eqn: quantity_of_interest_point}, \eqref{eqn: quantity_of_interest_measurement_point} are given as follows
\begin{equation}
    \label{eqn:clarity_dynamics}
    \Dot{q} = \frac{C(x)^2}{R}(1 - q)^2 - Qq^2
\end{equation}

\section{Problem Formulation}
In this section, we provide the mathematical formulation of the problem. We first derive the clarity dynamics for multi-robot systems, then describe the environment model, and finally present the overall problem statement.

\subsection{Multi-robot Clarity Dynamics }

We consider the estimation of a scalar stochastic variable $m$ using $N$ robots. The system dynamics are:
\eqn{
\dot{m} = w(t),  &&w(t) \sim \Ncal(0, Q)
}
Let $X = [x^1, x^2, \dots, x^N]^T \in \R^{N \times 1}$ denote the stacked state vector of all robots. Each robot $i \in \Rcal$ measures $m$ as follows:
\eqn{
y^i = C(x^i)m + v^i(t),  &&v^i(t) \sim \Ncal(0, R(x_i))
}
Assuming the measurement noise is independent across agents, the measurements can be stacked as:
\eqn{
y(X) = C(X)m + v(X), &&v(t) \sim \Ncal(0, R(X)) 
}
where
\eqn{
C(X) = [C(x^1), C(x^2), \cdots, C(x^N)]^T \in \R^{N \times 1}
}
\eqn{
R(X) =
\begin{bmatrix}
R(x^1) & 0         & \cdots & 0 \\
0       & R(x^2)    & \cdots & 0 \\
\vdots  & \vdots    & \ddots & \vdots \\
0       & 0         & \cdots & R(x^N)
\end{bmatrix}
\in \R^{N \times N}
}
The Kalman filter equations for the scalar estimate $\mu$ and variance $P$ are:
\begin{subequations}
\eqn{
    \dot{\mu} &= P C(X)^T R(X)^{-1} (y(X) - C(X)\mu) \\
    \dot{P} &= Q - P C(X)^T R(X)^{-1} C(X) P
}
\end{subequations}
Since clarity is defined as $q = \frac{1}{1 + P}$, the clarity dynamics can be derived as follows:
\eqn{
\begin{aligned}
\dot{q} &= \frac{-\dot{P}}{(1 + P)^2} \\
&= \frac{1}{(1 + P)^2} \left( P^2 C(X)^T R(X)^{-1} C(X) - Q \right)
\end{aligned}
}
Substituting $P = \frac{1 - q}{q}$, we get
\eqn{
\label{multi_clarity_dynamics}
\begin{aligned}
\dot{q} &= (1 - q)^2 C(X)^T R(X)^{-1} C(X) - Qq^2  \\
&=  (1 - q)^2 \sum_{i \in \Rcal} \frac{C(x^i)^2}{R(x^i)} - Qq^2
\end{aligned}
}
The \eqref{multi_clarity_dynamics} define the clarity dynamics for the case when measurements from multiple robots are involved in estimating the quantity of interest. 

If $C(x^i)$ and $R(x^i)$ are constant for all $i \in \Rcal$, then the clarity dynamics \eqref{multi_clarity_dynamics} admit a closed-form solution for the initial condition $q(0) = q_0$:
\begin{equation}
    \label{eqn:multi_clarity_closed_form}
    q(t; q_0) = q_{\infty} \left(1 + \frac{2\gamma_1}{\gamma_2 + \gamma_3 e^{2kQt}} \right)
\end{equation}
where $k = \sqrt{ \frac{ \sum_{i \in \Rcal} \frac{C(x^i)^2}{R(x^i)} }{Q} }, q_{\infty} = \frac{k}{k+1}, \gamma_1 = q_{\infty} - q_0, \gamma_2 = \gamma_1(k - 1)$, and $\gamma_3 = (k - 1)q_0 - k$.


As $t \to \infty$, $q(t; q_0) \to q_\infty \leq 1$ monotonically. Thus $q_\infty$ defines the maximum attainable clarity. Equation~\eqref{eqn:multi_clarity_closed_form} can be inverted to determine the time required to increase clarity from $q_0$ to some $q_1$. This time is denoted $\Delta T: [0, 1]^2 \to \Rnonneg$:
\eqn{
\Delta T(q_0, q_1) = t \text{ s.t. } q(t, q_0) = q_1 \quad \text{for } q_1 \in [q_0, q_\infty) \label{eqn:multi_clarity_inverse}
}
For $q_1 < q_0$, we set $\Delta T(q_0, q_1) = 0$ while $\Delta T(q_0, q_1)$ is undefined for $q_1 \geq q_\infty$.

\subsection{Environment Specification}
Consider the coverage space $\Pcal$. We discretize the domain into a set of $N_p$ cells each with size $V$.\footnote{Size is length in 1D, area in 2D, and volume in 3D.} Let $m_p: [t_0, \infty) \to \mathbb{R}$ be the (time-varying) quantity of interest at each cell $p \in \Pcal_{\text{cells}} = \{1, ..., N_p\}$. We model the quantities of interest as independent stochastic processes:
\begin{subequations}
\label{eqn: quantity_of_interest}
\eqn{
    \Dot{m}_p &= w_{p}(t), &&w_{p}(t) \sim \mathcal{N} (0, Q_{p}) \label{eqn: quantity_of_interest_process}\\
    y_p &= C_{p}(X) m_{p} + v_p(t),  &&v_p(t) \sim \mathcal{N} (0, R(X)) \label{eqn: quantity_of_interest_measurement}
}
\end{subequations}
where $y_p \in \R$ is the output corresponding to cell $p$. $R(X)$ is the measurement noise, and $Q_p \in \Rplus$ is the process noise variance at each cell $p$. Since $m_p$ varies spatially and temporally under process noise $Q_p$ for each cell $p \in \Pcal_{\text{cells}}$, the environment becomes a \emph{stochastic spatiotemporal environment}.

\subsection{Problem Statement}
Consider a team of $N+1$ robots performing persistent coverage of a stochastic spatiotemporal environment~\eqref{eqn: quantity_of_interest} over a time horizon $[0, \infty)$. Among them, $N$ robots are rechargeable and require periodic recharging, while one robot serves as a mobile charging robot and does not require recharging.\footnote{This could represent, for instance, a ground vehicle with a battery that lasts several hours. A similar assumption is made in prior works \cite{cooperative_1, MOBILE_RAL_2022}.} The rechargeable robots model the mobile charging robot using~\eqref{eqn:mobile_charging}. The objectives for the rechargeable robots are twofold:

\begin{itemize}
    \item Generate nominal informative trajectories for rechargeable robots using clarity-driven ergodic search;
    \item Ensure mutually exclusive use of the mobile charging robot, which follows a nominal trajectory.
\end{itemize}

We formulate an optimization problem that captures these objectives. The objective function is designed to maximize clarity across the regions of interest, while constraints ensure that each robot’s energy level remains non-negative and that the robots exclusively share the single mobile charging station. We now define the clarity-based objective functional, along with the energy constraints and mutual exclusion constraints related to charging.
\begin{figure*}[t]
  \centering
  \includegraphics[width=2.1\columnwidth]{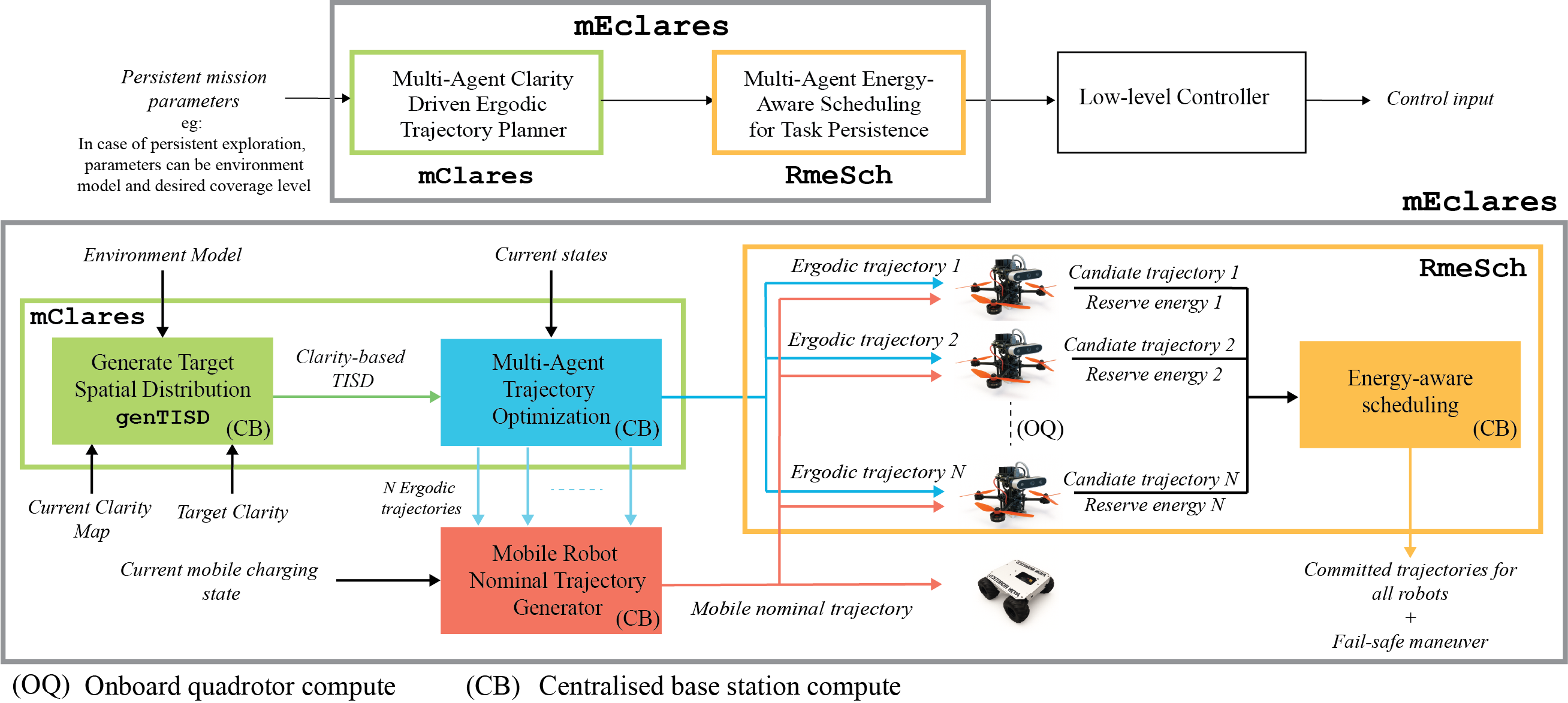}
  \caption{\mesch: The block diagram shows the complete proposed framework \mEclares. }
  \label{fig:mEclares_overview}
\end{figure*}

\begin{figure} [t]
  \centering
  \includegraphics[width=0.8\columnwidth]{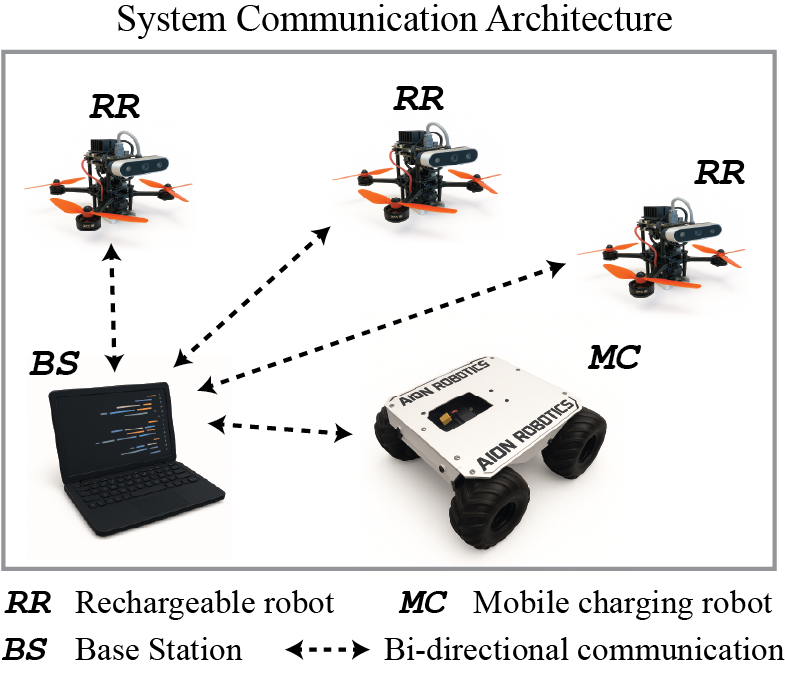}
  \caption{The supported communication architecture of the system.}
  \label{fig:comms}
\end{figure}

\subsubsection{Clarity-based Objective functional}

Assume the desired quality of information at each cell is encoded using a \emph{target clarity} $\overline q_p < q_{\infty, p}$ for each cell $p \in \Pcal_{\text{cells}}$. The target clarity can be different at each cell, indicating a different desired quality of information at each cell, but must be less than $q_{\infty, p}$, the maximum attainable clarity of the cell. If $\overline q_p \geq q_{\infty, p}$ for a cell $p \in \Pcal_{\text{cells}}$, then the robot would try to spend an infinite amount of time at a cell $p$, which is undesirable.

We use clarity as our information metric since it is particularly effective for stochastic spatiotemporal environments:
\begin{itemize}
    \item The clarity decay rate in cell $p$, i.e. $-Q_{p}q_{p}^2$, is explicitly dependent on the stochasticity of the environment $Q_{p}$ in \eqref{eqn: quantity_of_interest}. This allows the information decay rate to be determined from the environment model, and not set heuristically. Furthermore, spatiostatic environments are a special case: by setting $Q_{p} = 0$, clarity cannot decay. 
    \item While taking measurements of cell $p$, clarity $q_{p}$ monotonically approaches $q_{\infty, p} < 1$ for $Q_p, R(x^i) > 0, \forall i \in \Rcal$. This indicates that the maximum attainable information is upper bounded.
\end{itemize}

In this persistent task, the trajectory for each robot is replanned every $T_H \in \Rplus$ seconds, i.e., at times $\{t_0, t_1, \cdots \}$ for $t_k = k T_H$, $k \in \naturals$. At the $k$-th iteration, the objective is to minimize the \textit{mean clarity deficit} $q_d(t_k + T_H)$, which is defined as
\begin{equation}
    \label{eq:mean_clarity_error}
    q_d(t_{k} + T_H) =  \frac{1}{N_p}\sum_{p = 1}^{N_p}  \max (0, \overline{q}_{p} - q_{p}(t_{k} + T_H)) 
\end{equation}
where $q_{p}(t_{k} + T_H)$ is the clarity at time $t_{k} + T_H$ of cell $p \in \Pcal_{\text{cells}}$. However, in order to persistently monitor a stochastic spatiotemporal environment over a long time horizon, the robot's energy constraints must be taken into consideration.

\subsubsection{Minimum Energy and Mutually Exclusive Charging Constraints}

We define $\Tcal^i$ as the set of times $i^{th}$ robot returns to the charging station:
\eqn{
\Tcal^i = \{t_{0}^i, t_{1}^i, \cdots, t_{m}^i, \cdots\},  \forall i \in \Rcal, \forall m \in \nonnegintegers
}
where $t_{m}^i$ represents the $m^{th}$ return time of the $i^{th}$ rechargeable robot. Let $\Tcal = \cup_{i \in \Rcal} \Tcal^i$ be the union of return times for all robots. We now define two conditions that must hold for all times $t \in [t_0, \infty)$ to achieve the objectives stated above:
\begin{subequations}
\eqn{
&e^i(t) \geq {e^i_{min}} \quad  \quad \forall t \in [t_0, \infty),  \forall i \in \Rcal  \label{eq:min_energy_constraints}\\
&|t^{i_1}_{m_1} - t^{i_2}_{m_2}| > T_{\delta}  \quad  \forall t^{i_1}_{m_1},t^{i_2}_{m_2}  \in \Tcal \label{eq:gap_lower_bound}
}
\end{subequations}
Condition \eqref{eq:min_energy_constraints}, the \textbf{\textit{minimum SoC condition}}, defines the required minimum battery SoC for all rechargeable robots. Condition \eqref{eq:gap_lower_bound}, the \textbf{\textit{minimum gap condition}}, ensures a sufficient time gap between the returns of two robots to avoid charging conflicts. The term $T_{\delta} = T_{ch} + T_{bf}$ represents the charging duration and the buffer time needed for a robot to resume its mission before the next robot arrives. 

Now we define the optimization problem, which must be solved at times $\{t_0, t_1, \cdots\}$ for $t_k = kT_H$:

\begin{prob}
At each planning time $t_k$, the problem is posed as:
\begin{subequations}    
\label{eq:overall_prob_finite_horizon}
\begin{align}
\min_{\chi^i(t),u^i(t)} \quad &  q_d(t_{k} + T_H)\\
\textrm{s.t.} \quad 
& \chi^i(t_k) = \chi^i_k, \quad \forall i \in \Rcal \\
& \dot{\chi}^i = f(\chi^i, u^i), \quad \forall i \in \Rcal  \\
& \dot{q}_{p} = g(x, q_p), \quad \forall p \in \Pcal_{\text{cells}}\\
& \norm{x^i(t) - x^j(t)} \geq d_{\min}, \forall i \neq j \label{eq:col_avoid_constraint}\\
& e^i(t) \geq e^i_{\min}, \quad \forall i \in \Rcal \label{eq:finite_energy_constraint} \\
& |t^{i_1}_{m_1} - t^{i_2}_{m_2}| > T_{\delta}, \quad \forall t^{i_1}_{m_1}, t^{i_2}_{m_2} \in \Tcal_{k,H}
\label{eq:finite_gap_constraint}
\end{align}
\end{subequations}
\end{prob}
where $q_d(t_{k} + T_H)$ is the mean clarity deficit at the end of system trajectory $\chi^i(t; t_k, \chi_k)$, $\forall t \in [t_k, t_{k} + T_H]$, $\forall i \in \Rcal$ given by \eqref{eq:mean_clarity_error}, $g : \mathcal{X} \times [0, 1] \rightarrow \mathbb{R}_{\geq 0}$ define the clarity dynamics \eqref{multi_clarity_dynamics}, and $e_{min}$ is the minimum energy level allowed for the robot. \eqref{eq:col_avoid_constraint} defines the collision avoidance constraint for all robots $i, j \in \Rcal$. The set $\Tcal_{k,H} = \Tcal \cap [t_k, t_k + T_H]$ denotes all charging return times within the current planning horizon, and is used to enforce the minimum gap condition over this finite window.

\section{Method Motivation \& Overview}

\subsection{Method Motivation}

To solve problem~\eqref{eq:overall_prob_finite_horizon}, we draw inspiration from ergodic search. As discussed in~\cref{sec:prelim_ergodic}, ergodic search generates trajectories by solving problem~\eqref{eq:multiagent_ergodic_optimization}. When the target information spatial distribution (TISD) $\phi$ is constructed based on the current clarity $q_p(t)$ and a desired target clarity $\overline{q}_p$ at each cell, ergodic search naturally minimizes the mean clarity deficit~\eqref{eq:mean_clarity_error}. In this work, we propose a principled method to construct $\phi$ using clarity.

However, the optimization in~\eqref{eq:multiagent_ergodic_optimization} does not account for energy constraints~\eqref{eq:min_energy_constraints} or the minimum gap requirement~\eqref{eq:gap_lower_bound}. While one could include these constraints in~\eqref{eq:multiagent_ergodic_optimization}, the non-convexity of the problem makes it difficult to ensure convergence or feasibility. We therefore propose \mEclares{}, shown in~\cref{fig:mEclares_overview}, as an approximate solution to~\eqref{eq:overall_prob_finite_horizon}.\footnote{Note that $q_d(T)$ is not differentiable, making direct optimization of~\eqref{eq:overall_prob_finite_horizon} challenging. In contrast,~\eqref{eq:multiagent_ergodic_optimization} is differentiable and can be efficiently approximated using gradient-based trajectory optimization solvers.}

\subsection{Method Overview}
Our approach decouples~\eqref{eq:overall_prob_finite_horizon} into two sub-problems: (A) each robot computes an \emph{ergodic trajectory} that maximizes information collection while ignoring energy constraints; (B) each robot then generates a \emph{candidate trajectory} that attempts to track a portion of the ergodic trajectory while reaching the charging station before depleting its energy. All candidate trajectories are sent to the base computer, where the \Rmesch{} algorithm evaluates them and decides whether to \emph{commit} each one. Committed trajectories are guaranteed to satisfy the minimum SoC constraint~\eqref{eq:min_energy_constraints} and the minimum gap constraint~\eqref{eq:gap_lower_bound}. Each robot always tracks its most recent committed trajectory, ensuring persistent exploration while respecting energy constraints and coordinating exclusive access to the mobile charging station. The nominal trajectory of the mobile charging robot is generated so that it travels along the network of rechargeable robots. 

These components operate on different timescales. The ergodic trajectory is replanned every $T_H$ seconds, while the committed trajectory is updated every $T_E < T_H$ seconds.\footnote{$T_E, T_H \in \mathbb{R}_+$ are user-defined parameters.}
\begin{itemize}
    \item At each time $t_k = k T_H, \ k \in \mathbb{N}$:
    \begin{itemize}
        \item Recompute the TISD $\phi$ using \genTISD{}.
        \item Recompute the ergodic trajectory for the rechargeable robots and the nominal trajectory of the mobile charging robot.
    \end{itemize}
    \item At each time $t_j = j T_E, \ j \in \mathbb{N}$:
    \begin{itemize}
        \item Each robot generates a candidate trajectory and sends it to the base computer.
        \item The central \Rmesch{} algorithm evaluates the candidate trajectories and decides whether to commit each one of them.
        \item \Rmesch{} also publishes the fail-safe schedule in case the central node fails before the next decision iteration $j+1$.
    \end{itemize}
\end{itemize}
\begin{algorithm}
\footnotesize
\caption{The \genTISD{} algorithm}\label{alg:genTSD}
\begin{algorithmic}[1]

\Function{ \genTISD{} }{$q_{p}$, $\overline{q}_{p}$, environment model \eqref{eqn: quantity_of_interest}}
    \For{$p \in \{1, ..., N_p\}$}
        \State $k \gets \sqrt{ \frac{ \sum_{i \in \Rcal} \frac{C(x^i)^2}{R(x^i)} }{Q} }$
        \State $q_{\infty} \gets k/(k+1)$
        \State $\overline{q} \gets \min (\overline{q}_{p} , q_{\infty, p} - \epsilon)$
        \State $\phi_p \gets \Delta T(\overline{q}, q_p)$ using \eqref{eqn:multi_clarity_inverse}
    \EndFor
    \State $\phi_{p} \gets \phi_{p}/({\sum_{p = 1}^{N_p} \phi_{p}}), \quad \forall p \in \{1, ..., N_p\}$
    \State \Return $\phi_{p} \  \forall p \in \{1, ..., N_p \}$
\EndFunction
\end{algorithmic}
\end{algorithm}
Although the proposed method uses centralized decision-making, we distribute computation across the network to enable real-time operation. \cref{fig:mEclares_overview} provides a high-level view of the architecture, and \cref{fig:comms} illustrates two supported communication models. Construction of the TISD, multi-agent ergodic trajectory generation, and the scheduling component of \Rmesch{} are executed on a central base computer.

Each rechargeable robot generates a single \emph{candidate trajectory} onboard, which attempts to track a portion of the ergodic trajectory before reaching the charging station. All candidate trajectories are sent to the base computer, where the \Rmesch{} algorithm jointly evaluates them and determines which trajectories to commit, based on energy feasibility and coordination requirements. This setup enables decentralized trajectory generation at the robot level while maintaining global coordination through centralized scheduling.

\subsection{Method Organization}


In the next sections, we describe the \mEclares{} framework in detail. We begin with \genTISD{}, a method for generating the target information spatial distribution (TISD) used in multi-agent ergodic search. We then present the details of the \Rmesch{} algorithm. We also establish notation for trajectories. Let $x^i([t_k, t_k + T_H]; t_k, x^i_k)$ represent the ergodic trajectory for the $i^{\text{th}}$ rechargeable robot at time $t_k$, starting from state $x^i_k$ and defined over a time horizon of $T_H$ seconds. We denote this as $x^{i, \text{ergo}}_k$. The same notation applies to other trajectories. An overview of the notation is provided in \cref{table:1}. Without loss of generality, we present our method assuming $N$ rechargeable robots modeled as quadrotors and one mobile charging rover.

\section{Generate Target Spatial Distribution (\genTISD{})}

The \genTISD{} algorithm is described in \cref{alg:genTSD}. Let $\phi_p$ denote the target information density evaluated for cell $p$. At the $k$-th iteration (i.e, at time $t_k = k T_H$), we set $\phi_p$ to be the time that the robot would need to increase the clarity from $q_p(t_k)$ to the target $\overline q_p$ by observing cell $p$ (Lines 3-6). This is determined using~\eqref{eqn:multi_clarity_inverse}. The small positive constant $\epsilon > 0$ in Line 5 ensures that target clarity is always less 
than the maximum attainable clarity, i.e., $\overline q_p < q_{\infty, p}$. Finally, we normalize $\phi_p$ such that the sum of $\sum_{p \in \Pcal_{\text{cells}}} \phi_p = 1$ (Line 8). Once $\phi$ is constructed, trajectory optimization solvers can be used to generate the ergodic trajectories $x^{i, ergo}_k, \forall i \in \Rcal$.

\section{Mobile Charging Station Nominal Trajectory}

To support coordination with the team of rechargeable robots, we generate a nominal trajectory for the mobile charging station that tracks the geometric center of the team’s nominal ergodic trajectories. At each decision point $t_k$, the geometric center of the team’s ergodic trajectories is defined as:
\eqn{
x^{\text{cent}}(t) = \frac{1}{N} \sum_{i=1}^N x^{i, \text{ergo}}_k(t), \quad \forall t \in [t_k, t_k + T_H].
}
At time $t_k$, the mobile charging nominal trajectory $x_{k}^{c, nom}$, defined over the time interval [$t_{k}, t_{k, H}$], is generated by solving the following optimal control problem:  
\begin{subequations}
\label{eq:nom_charging_Station}
\eqn{
        \min_{x^c(t),u^c(t)} \ &  \int_{t_{k}}^{t_{k,H}} \norm{x^c(t) - x^{\text{cent}}(t)}_{\mathbf{Q}}^2 + \norm{u^c(t)}_{\mathbf{R}}^2 dt \label{eq:rover_centroid}\\
        \textrm{s.t. } \ & x^c(t_{k}) = x_{k}^{c, nom}(t_{k}) \\
        &  \Dot{x}^c = f_r^c(x^c, u^c)
}
\end{subequations}
where $\mathbf{Q} \in \mathbb{S}^{n}_{++}$ and $\mathbf{R} \in \mathbb{S}^{m}_{++}$ weights state cost and  control cost respectively.
 This formulation ensures that the mobile charging robot stays centrally positioned relative to the rechargeable robots without requiring explicit communication or coordination, enabling robust support for persistent operation. 

\section{Robust Multi-Agent Energy-Aware Scheduling for Task Persistence (\Rmesch{})}

To facilitate readers, we organize the presentation of \Rmesch{} into three subsections: \cref{sec:RmeSch_mot} introduces the motivation behind \Rmesch{} and outlines its key ideas,  \cref{sec:RmeSch_nom} describes the method in detail, and \cref{sec:RmeSch_guarantees} discusses theoretical guarantees around \Rmesch{}.

\subsection{\Rmesch{} Motivation and Key Ideas}\label{sec:RmeSch_mot}

As a low-level module, \Rmesch{} ensures task persistence. The solution follows three steps, with the \Rmesch{} module running every $T_E$ seconds at discrete time steps $t_j = jT_E$, where $j \in \mathbb{Z}_0$:

\begin{itemize}
    \item Compute the rendezvous point where the rechargeable robot will return for recharging.
    \item Determine the reserve energy required at the rendezvous to account for uncertainty in the charging station's position.
    \item Construct a trajectory that follows a portion of the ergodic trajectory before reaching the rendezvous point. We refer to this as the \textit{candidate trajectory}.
    \item Commit the candidate trajectory if it satisfies both the minimum energy condition~\eqref{eq:min_energy_constraints} and the minimum gap condition~\eqref{eq:gap_lower_bound}. The result is the \textit{committed trajectory}.
    \item Along with the committed trajectory, a fail-safe return schedule is generated based on the current SoC level to ensure safe return in case of central node failure.
\end{itemize}
Before detailing each step, we first explain how \Rmesch{} evaluates the satisfaction of conditions \eqref{eq:min_energy_constraints} and \eqref{eq:gap_lower_bound}. This is one of our key contributions, and we explain it by first discussing its motivation and then describing its mechanism.

\begin{figure} [t]
  \centering
  \includegraphics[width=1.0\columnwidth]{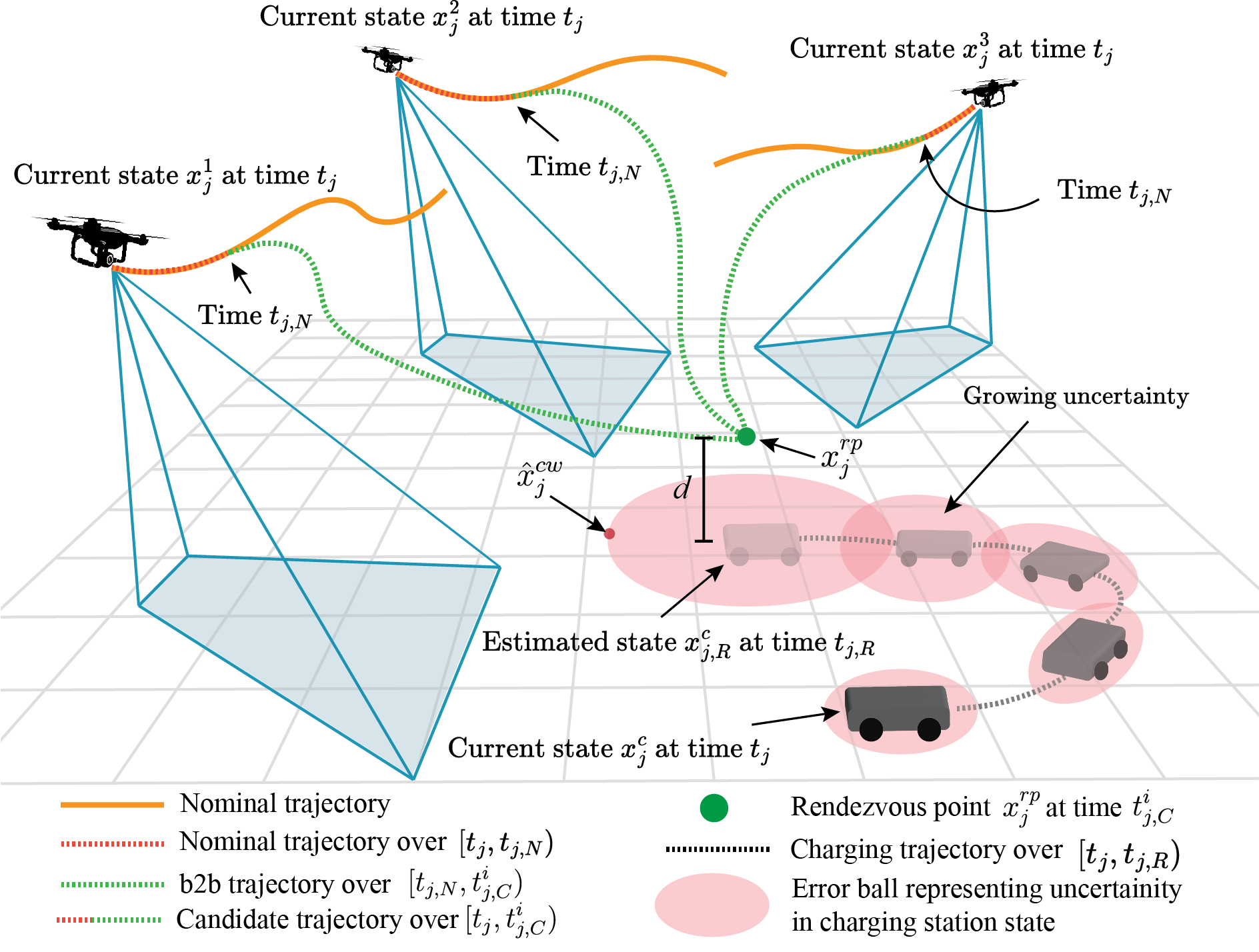}
  \caption{This figure illustrates the generation of candidate trajectories at time $t_j$. All the candidate trajectories terminate at the rendezvous point $x^{rp}_j$ at time $t^i_{j,C}$.}
  \vspace{-20pt}
  \label{fig:traj_eware}
\end{figure}
\subsubsection{\Rmesch{} Motivation}

Consider $N$ quadrotors sharing a mobile charging rover, as shown in \cref{fig:traj_eware}. To prevent charging conflicts, we propose a scheduling method based on two principles.

First, if multiple robots are predicted to arrive simultaneously, one is rescheduled to arrive earlier using gap flags explained below. Second, if robots visit the charging station at different times due to varying discharge profiles, the algorithm checks that each robot has enough energy to continue its mission, ensuring that the minimum energy condition is never violated.

To implement this approach, we introduce two modules: \gware{} and \eware{}. The \gware{} module enforces the minimum time gap between consecutive charging sessions by constructing \textbf{\textit{gap flags}} and resolving conflicts by selecting the robot with the least remaining flight time to return first—similar to dropping a constraint to restore feasibility. This allows the remaining robots to maintain the desired gap defined by \eqref{eq:gap_lower_bound}. Once the gap flags are satisfied, the \eware{} module checks whether each robot has enough energy to continue its mission, ensuring that the minimum energy condition \eqref{eq:min_energy_constraints} is also satisfied. 
\begin{table}[h!]
\scriptsize
\setlength\extrarowheight{-3pt}
\begin{tabular}{p{0.10\linewidth} |
p{0.75\linewidth}}
\midrule
\textbf{Symbol} & \quad \quad \quad \quad \quad \quad \quad \textbf{Definition}  \\ 
\midrule
\multicolumn{2}{l}{Indices} \\
\midrule
$i$ & Rechargeable robot index \\
$j$ & \Rmesch{} iteration index  \\
$k$ & Nominal trajectory planner iteration index  \\
$l$ & Rechargeable robot index in sorted list \\
\midrule
\multicolumn{2}{l}{Constant shared time horizons} \\
\midrule
$T_{\delta}$ & $T_{\delta} = T_{ch} + T_{bf}$ Charging + Buffer time  \\
$T_N$ & Nominal trajectory horizon of the rechargeable robot available at time $t_j$  \\
$T_R$ & Charging robot nominal trajectory horizon available at $t_j$ / Time taken by the rechargeable robot to reach the charging station \\
$T_E$ & Time interval between $j$ and $j + 1$ iteration  \\


\midrule
\multicolumn{2}{l}{Dynamic time horizons for robot $i$ computed at $t_j$} \\
\midrule
$T^i_{L,j}$ & Worst-case landing time  \\
$T^i_{C,j}$ & Candidate trajectory ($T^i_{C,j} = T_R - T^i_{L,j}$) \\
$T^i_{B,j}$ & back-to-base trajectory ($T^i_{B,j} = T^i_{C,j} - T_N$)  \\
$T^l_{F,j}$ &  Remaining battery time of the $l^{th}$ robot in the sorted list at time $t_j$ \\
\midrule
\multicolumn{2}{l}{Time points} \\
\midrule
$t_j$ & Start time of iteration $j$ \\
$t_{j,N}$ & $t_j + T_N$ \\
$t^i_{j,C}$ & $t_j + T^i_{C,j}$ \\
$t_{j,R}$ & $t_j + T_R$ \\
$t_{m}^i$ &  $m^{th}$ time $i^{th}$ robot returns for recharging   \\
\midrule
\end{tabular}

\caption{Time and Index Notation at a glance}
\vspace{-23pt}
\label{table:1}
\end{table}
\subsubsection{Key Idea: Construction of Gap flags}

We begin by describing the construction of \textit{gap flags} and their role in preventing charging conflicts. At each iteration of \Rmesch{}, rechargeable robots are sorted by their remaining flight time into the ordered set $\Rcal' = \{1', \dots, N'\}$, where $1'$ has the least flight time. For each robot $l \in \Rcal' \setminus \{1'\}$, a gap flag is constructed relative to $1'$ as:
\eqn{
G^{l} = T_{F,j}^{l} > (T_R + T_E + lT_{\delta}),
}
where $T_{F,j}^{l}$ is $l^{th}$ robot remaining flight time at time $t_j$, $T_R$ is the time to reach the charging station, $T_E$ is the decision interval, and $T_{\delta}$ includes the charging duration and the buffer time required to resume the mission.

These flags enforce a minimum gap of $lT_{\delta}$ between robot $1'$ and robot $l$ in $\Rcal'$. For example, the minimum gap between the first and third robots is $2T_{\delta}$. If any gap flag is not satisfied, the robot with the least remaining flight time, i.e., $1'$, is rescheduled for recharging. The satisfaction of the gap flag condition guarantees that there will be at least $T_{\delta}$ between successive charging sessions.

\subsection{\Rmesch{} Methodology}\label{sec:RmeSch_nom}

In this section, we present \Rmesch{} in detail. After establishing the construction of gap flags, we demonstrate how they are iteratively checked within the full solution scheme to ensure conditions \eqref{eq:min_energy_constraints} and \eqref{eq:gap_lower_bound} hold for all $t \in [0, \infty)$. We also discuss how the proposed method accounts for the uncertainty in the position of the mobile charging robot. This solution is developed under a few key assumptions: 

\begin{assumption}
\label{assumption:1}
At each iteration of \Rmesch{}, the nominal trajectories of the rechargeable robots are known for $T_N$ seconds, and the nominal trajectory of the mobile charging robot's for $T_R$ seconds. This can be ensured by selecting time horizons such as they satisfy $T_N, T_R < T_H$. 
\end{assumption}

\subsubsection{Estimating Rendezvous Point}
\label{renp}
At the $j^{th}$ iteration of \Rmesch{}, we estimate the mobile charging robot's position at $t_j + T_R$, i.e., $\hat{x}^c(t_{j, R})$, and place the rendezvous point $d$ meters above it. The rechargeable robots will return to this point, as shown in \cref{fig:traj_eware}.

Given the current state estimate $\hat{x}^c(t_j)$ and its covariance $\Sigma^c(t_j)$ from the EKF, we use the EKF predict equations \cite{applied_optimal_estimation} to compute the mobile charging robot state estimate at $t_{j,R}$, i.e. $\hat{x}^c(t_{j,R})$ and $\Sigma^c(t_{j,R})$. The rendezvous point $x^{rp}_j \in \R^n$ is then computed as follows:
\eqn{
x^{rp}_j = 
    \begin{bmatrix}
        \Psi(\hat{x}^c(t_{j,R})) \\
        \textbf{0}_{n-2} 
    \end{bmatrix}
    +
    \begin{bmatrix}
        \textbf{0}_{2} \\
        d \\
        \textbf{0}_{n-3}
    \end{bmatrix}    
}
where  $\Psi : \mathbb{R}^{c} \to \mathbb{R}^{2}$ is a mapping that
returns the 2-D position coordinates, $d \in \mathbb{R}_{> 0}$ is added to the z-dim of the state, and $n$ is the rechargeable robot state dimension \eqref{eqn:rechargeable}. The rendezvous point corresponds to the hover reference state $x_{j}^{rp}$ for the rechargeable robot, positioned $d$ meters above the predicted position of the mobile charging robot.


\subsubsection{Reserve Energy for Uncertainty-Aware Landing}

Along with the rendezvous point $x^{rp}_j$, we also compute the remaining energy the robot must have at the rendezvous point to account for uncertainty in the mobile charging robot's position for landing. This corresponds to the energy cost of going from rendezvous point $x^{rp}_j$ to the furthest state $\hat{x}^{cw}_j$ within the 95\% confidence interval covariance ellipse.

Now, we compute the furthest point on the boundary of the 95\% confidence ellipse as follows:
\eqn{
\hat{x}^{cw}_j = \hat{x}^c(t_{j,R}) + q_{max} \sqrt{\chi^2_{c, 0.95} \lambda_{max}} 
}
where $\lambda_{max} \in \R$ is the largest eigenvalue of the covariance matrix $\Sigma^c(t_{j,R})$, $q_{max} \in \R^c$ is the eigenvector corresponding to $\lambda_{max}$, and $\chi^2_{c, 0.95}$ corresponds to the value from the chi-squared distribution with $c$ degrees of freedom in the 95\% confidence interval. To compute the reserve energy, we formulate the following problem $\forall i \in \Rcal$:
\begin{subequations}
\label{eq:landing_overall}
\eqn{
        \min_{\chi^i(t),u^i(t), t^i_{f}} \ &  t^i_{f} \label{eq:landing_prob}\\
        \textrm{s.t. } \ & \chi^i(t^i_{0}) = \chi^i_{rp} \\
        &  \Dot{\chi} = f_r^i(\chi^i, u^i) \\
        & x^i(t^i_{f}) = \hat{x}^{cw}_j
}
\end{subequations}
where $\chi^i_{rp} = [[x^{rp}_j]^T, e^i_0]^T $ is the initial system state comprising of $x^{rp}_j \in \R^n$ and the energy $e_0 \in \mathbb{R}_{> 0}$. The reserve energy $e^{i, res}_{j}$ and landing time $T^i_{L,j}$ are computed as follows:
\begin{subequations}
    \eqn{e_{j}^{i, res} &= e^i(t^i_{f}) - e^i(t^i_{0}) \quad \forall i \in \Rcal \\
    T^i_{L,j} &= t^i_{f} - t^i_{0} \quad \forall i \in \Rcal
    }
\end{subequations}


\subsubsection{Construction of Candidate Trajectories} 

Now, we generate the candidate trajectories for all rechargeable robots to reach the rendezvous point $x^{rp}_j$ from the current state $x^i(t_j)$ within $T^i_{C,j} = T_R - T^i_{L,j}$ s.
\begin{algorithm}
\footnotesize
\caption{The \Rmesch{} algorithm}\label{alg:mesch}
\begin{algorithmic}[1]

\Function{\Rmesch{}}{$x_j^{i,\text{can}},\ x_{j-1}^{i,\text{com}},\ e_j^{i,\text{res}}$}
    \If{$x_j^{i,\text{can}},\ x_{j-1}^{i,\text{com}},\ e_j^{i,\text{res}}$ not received for all $ i \in \Rcal$ }
        \State \Return \Rmesch($x_j^{i,\text{can}},\ x_{j-1}^{i,\text{com}},\ e_j^{i,\text{res}}$)
    \EndIf
    \State \texttt{GapVio}, $x_{j}^{i,\text{com}}, \Rcal'$ $\gets$ \gware{($x_j^{i,\text{can}},\ x_{j-1}^{i,\text{com}}$)}
    \State $ret^i_j \gets \text{index}(i, \Rcal') \quad \text{s.t. } \Rcal'[l] = i$ \Comment{Index of robot $i$ in $\Rcal'$}
    \If{\texttt{GapVio} $==$ 1}
        \State \textbf{Publish} $\text{mobcon}_j$ = True \Comment{publishes message to mobile charging station to continue the mission}
        \State \Return $x_{j}^{i,\text{com}},  l^i \quad \forall i \in \Rcal$
    \EndIf
    \State $x_{j}^{i,\text{com}} \gets$ \eware{($x_j^{i,\text{can}},\ x_{j-1}^{i,\text{com}},\ e_j^{i,\text{res}}$)}
    \State \textbf{Publish}  $\text{mobcon}_j$ = True 
    \State \Return $x_{j}^{i,\text{com}},  ret^i_j  \quad \forall i \in \Rcal$
\EndFunction
\end{algorithmic}
\end{algorithm}
Given nominal trajectories $x^{i, nom}_j$ $\forall i \in \Rcal$, we construct a candidate trajectory that tracks a portion of the nominal trajectory for $T_N$ s and then reaches the rendezvous point $x^{rp}_j$ within $T^i_{B,j}$ = $T^i_{C,j} - T_N$ s. For the $i^{th}$ rechargeable robot, the candidate trajectory is constructed by concatenating the nominal trajectory with a \textbf{\textit{back-to-base (b2b) trajectory}}. 
Let the $i^{th}$ rechargeable robot state at time $t_j$ be $x^i_j \in \Xcal$ and the system state at $t_j$ be $\Xcal^i_j \in \Zcal^i_r$. We construct a b2b trajectory $x^{i,b2b}_j$ defined over interval [$t_{j,N}, t^i_{j,C}$] by solving:
\begin{subequations}
\label{eq:b2b_prob_overall}
\eqn{
        \min_{x^i(t),u^i(t)} \ &  \int_{t_{j,N}}^{t^i_{j,C}} \norm{x^i(t) - x^{rp}_j}_{\mathbf{Q}}^2 + \norm{u^i(t)}_{\mathbf{R}}^2 dt \label{eq:b2b_prob}\\
        \textrm{s.t. } \ & x^i(t_{j,N}) = x_{j}^{i, nom}(t_{j,N}) \\
        &  \Dot{x}^i = f_r^i(x^i, u^i) \\
        & x^i(t^i_{j,C}) = x^{rp}_j
}
\end{subequations}

where $\mathbf{Q} \in \mathbb{S}^{n}_{++}$ and $\mathbf{R} \in \mathbb{S}^{m}_{++}$ weights state cost and  control cost respectively.

Once b2b trajectory $x_{j}^{i,b2b}$ is generated, we numerically construct the system candidate trajectory
\eqn{
    \chi^{i,can}_j &= \begin{cases} x^{i,can}_j(t), & t \in [t_j, t^i_{j,C}) \\
        e^{i,can}_j(t), & t \in [t_j, t^i_{j,C})
   \end{cases} 
}over a time interval [$t_j, t^i_{j,C}$) by solving the initial value problem for each rechargeable robot system, i.e.
\begin{algorithm}
\footnotesize
\caption{The \gware{} algorithm}\label{alg:gware}
\begin{algorithmic}[1]
\Function{\gware{}}{$x_j^{i,\text{can}},\ x_{j-1}^{i,\text{com}}$}
    \State Sort $x_j^{i,\text{can}}$ based on $T_F^i$
    \For{$l \in \Rcal' \setminus \{1'\} = \{2', \dots, N'-1\}$}
        \State $G^l \gets (T_F^l - T_R - T_E) > l(T_\delta)$
        \If{$G^l == 0$ \textbf{and} $l.\text{charging} \neq \text{1}$}
            \State $x_j^{1',\text{com}} \gets x_{j-1}^{1',\text{com}}$
            \State $x_j^{l,\text{com}} \gets x_j^{l,\text{can}}$ for all $l \in \Rcal'$
            \State \Return \texttt{True}, $x_j^{i,\text{com}}$, $\Rcal'$ 
        \EndIf
    \EndFor
    \State \Return \texttt{False}, $x_j^{i,\text{com}}$, $\Rcal'$
\EndFunction
\end{algorithmic}
\end{algorithm}
\begin{algorithm}
\footnotesize
\caption{The \eware{} algorithm}\label{alg:eware}
\begin{algorithmic}[1]

\Function{\eware{}}{$x_j^{i,\text{can}},\ x_{j-1}^{i,\text{com}},\ e_j^{i,\text{res}}$}
    \For{$i \in \{1, \dots, N\}$}
        \If{$e^i(t) \geq e_j^{i,\text{res}} \quad \forall t \in [t_j, t^i_{j,C}]$}
            \State $x_j^{i,\text{com}} \gets x_j^{i,\text{can}}$
        \Else
            \State $x_j^{i,\text{com}} \gets x_{j-1}^{i,\text{com}}$
        \EndIf
    \EndFor
\EndFunction

\end{algorithmic}
\end{algorithm}
\begin{subequations}
\eqn{
\label{eqn:candidate}
    \Dot{\chi}^i &= f(\chi^i, u^i(t)),\\
    \chi^i(t_j) &= \chi^i_j \\
    u^i(t) &= \begin{cases} \pi^i_r(\chi^i,  x^{i,nom}_j(t)), & t \in [t_j, t_{j,N}) \\
        \pi^i_r (\chi,  x_{j}^{i,b2b}(t)), &  t \in [t_{j,N}, t^i_{j,C})
   \end{cases} 
}
\end{subequations}

where $\pi^i_r: \mathcal{Z}_r^i \times \mathcal{X}_r^i \rightarrow \mathcal{U}_r^i$ is a control policy to track the portion of the nominal trajectory and the b2b trajectory. \Cref{fig:traj_eware} shows the candidate trajectory generation process with 3 rechargeable robots and 1 mobile charging robot.

\subsubsection{Robust Energy-aware Scheduling}

Given the candidate trajectory and the reserve energy for each rechargeable robot $i \in \Rcal$, we check if the minimum SoC condition \eqref{eq:min_energy_constraints} and the minimum gap condition \eqref{eq:gap_lower_bound} are satisfied throughout the candidate trajectory. The overall algorithm described in \cref{alg:mesch} consists of the two subroutines: \gware{} (gap-aware) and \eware{} (energy-aware).

\subsubsection{Gap-aware (\gware)}
\gware{} described in \cref{alg:gware} checks if the rechargeable robots would continue to have the gap of $T_{\delta}$ seconds between their expected returns if the candidate trajectories were committed.  

(Lines 2-4) Here the gap flags are constructed for each $l^{th}$ robot that is not currently charging or returning, relative to the first robot in the sorted list (the $1^{\text{st}}$ robot)
\eqn{
\label{gap_flag}
G^{l} = T_{F,j}^{l} > (T_R + T_{E} + lT_{\delta})
}
Satisfaction of the gap flag condition at the $j^{th}$ iteration implies that rechargeable robots are estimated to have at least $T_{\delta}$ of the gap between their expected returns over the time interval $[t_j, t_{j,R})$, i.e. $\forall t^{i_1}_{m_1},t^{i_2}_{m_2}  \in \Tcal $:  
    \eqn{
&T_{F,j}^{l} > (T_R + T_{E} + lT_{\delta}) \\
&\implies |t^{i_1}_{m_1} - t^{i_2}_{m_2}| > T_{\delta}   &\forall t \in [t_{j}, t_{j, R} )
}
(Lines 5-7) If any gap flags are false, the committed trajectory of the first rechargeable robot in the sorted list remains unchanged, and it returns to the charging station. Meanwhile, the candidate trajectories are committed for the subsequent rechargeable robots in the sorted list. 


\subsubsection{Energy-aware (\eware)} If no gap flag violations occur, indicating that all rechargeable robots have sufficient gaps between their expected return for recharging, we proceed to check if each robot has adequate energy to continue the mission using \eware{} described in \cref{alg:eware}. 

(Lines 3-6) We assess whether each rechargeable robot can reach the charging station without depleting its energy below the reserve level while following the candidate trajectory. We refer to this condition as the \textbf{\textit{Reserve SoC Condition}}:
\eqn{
\label{min_soc}
e^i(t) > e^{i, res}_j \ \forall t \in [t_j, t^i_{j,C}]
}
If successful, the candidate trajectory replaces the current committed one. For the returning robot, a landing controller is assumed to exist:
\begin{assumption} When the returning rechargeable robot reaches rendezvous point $x^c_{rp}$ at $t^i_{j,C}$, there exists a landing controller $\pi^i_l : [t^i_{j,C}, t_{j,R}) \times \Xcal_r^i \rightarrow \Ucal$ that guides the rechargeable robot to the mobile charging robot.
\end{assumption}

\begin{algorithm}
\footnotesize
\caption{Fail-safe onboard rechargeable robot $i$}\label{alg:onboard_quad}
\begin{algorithmic}[1]
\State \textbf{Try} $(x_j^{i,\text{com}},\ \text{ret}^i_j) \gets$ \Rmesch{($x_j^{i,\text{can}},\ x_{j-1}^{i,\text{com}},\ e_j^{i,\text{res}}$)} until $t_{j-1,N}$ 
\If{successful}
    \State Execute $x_j^{i,\text{com}}$
\Else
    \If{$\text{ret}^i_{j-1} == 1$}
        \State Execute $x_{j-1}^{i,\text{com}}(t)$ over $[t_j,\ t_j + T_R]$
    \Else
        \State Idle (hover) for time $\text{ret}^i_{j-1} (T_\delta)$
        \State Execute $x_{j-1}^{i,\text{com}}(t - \text{ret}^i_{j-1} (T_\delta))$ over time horizon $[t_j + \text{ret}^i_{j-1} (T_\delta),\ t_j + \text{ret}^i_{j-1} (T_\delta) + T_R]$
    \EndIf
\EndIf
\end{algorithmic}
\end{algorithm}
\begin{algorithm}
\footnotesize
\caption{Fail-safe onboard mobile charging robot}\label{alg:onboard_mobile}
\begin{algorithmic}[1]
 \State At time $t_j$  \textbf{Initialize} $\text{mobcontinue}_j$ = False
 \State \textbf{Try} Message $\text{mobcontinue}_j$ received until $t_{j-1,N}$
\If{successful}
    \If{$\text{mobcontinue}_j$ == True}
        \State \textbf{Continue} executing the nominal trajectory
    \EndIf
\Else
    \State Execute stopping at time $t_j + T_R$
\EndIf
\end{algorithmic}
\end{algorithm}
\subsubsection{Fail-safe maneuver planning} 
At each iteration of \Rmesch{}, we also plan a fail-safe maneuver to handle scenarios involving central node failure or communication delays. Specifically, \Rmesch{} transmits to each rechargeable robot a message indicating whether to commit the new trajectory. Along with this, each robot is assigned a return position in the sorted list $\Rcal'$, which is generated based on remaining flight time. This position provides each robot with its rank in the return sequence in case no message is received due to failure.
The logic executed onboard each rechargeable robot $i$ is detailed in \cref{alg:onboard_quad} and the fail-safe logic for the mobile charging robot is detailed in \cref{alg:onboard_mobile}. \Cref{fig:central_node_fail} illustrates the behavior of the fail-safe maneuver.

\textbf{\textit{1) Fail-safe maneuver onboard rechargeable robot}}
(Lines 1--3) Once the candidate trajectories are generated onboard, each rechargeable robot transmits its candidate trajectory to the base computer, which executes \Rmesch{} to determine whether the trajectory should be committed. Starting at time $t_j$, each robot awaits a response from the base until $t_{j-1,N}$. Nominally, the robot should receive this message by $t_{j,T+E} < t_{j-1,N}$; however, due to potential communication delays, a response may arrive later. If a valid response is received by $t_{j,N}$, the robot executes the new committed trajectory.

 \begin{figure*}[t]
  \centering
  \includegraphics[width=2.1\columnwidth]{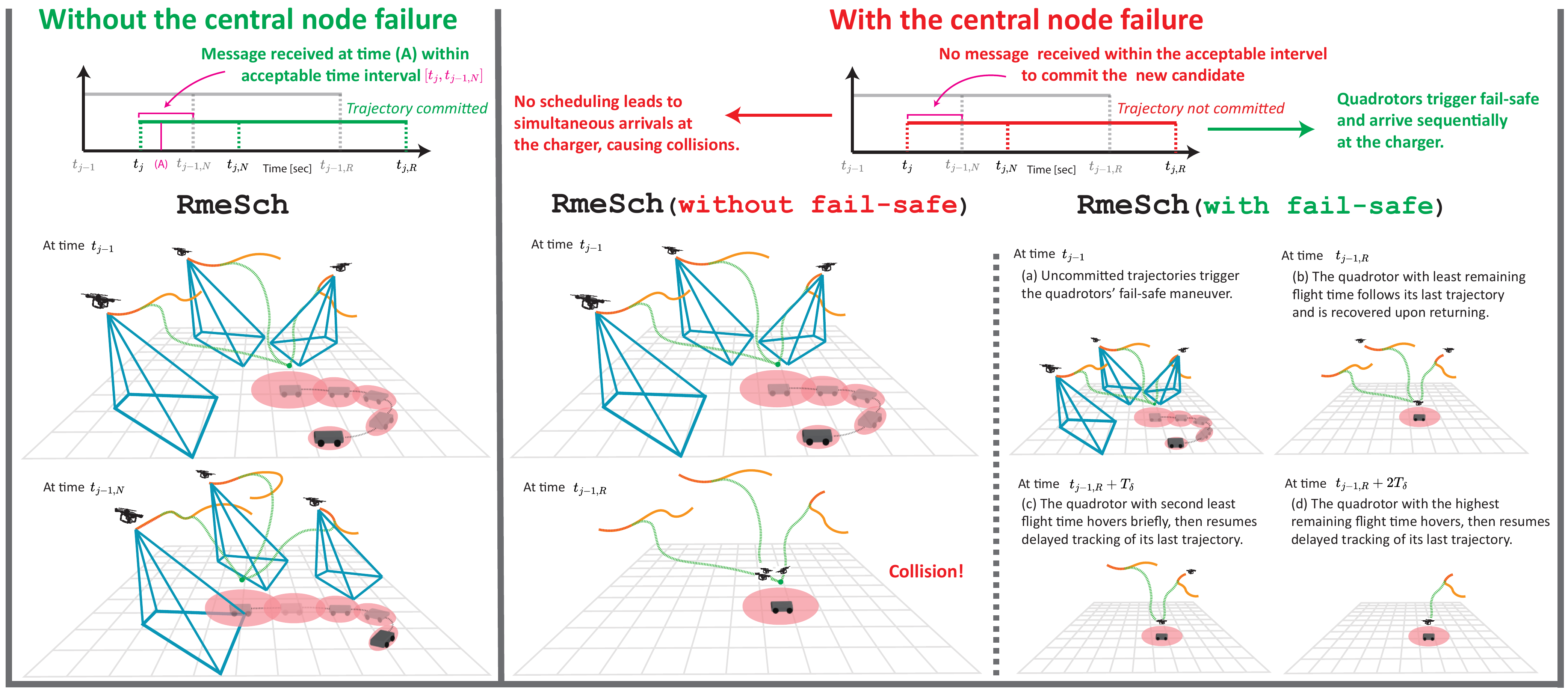}              
  \caption{The figure shows the behavior of the algorithm when the central node fails.}
\label{fig:central_node_fail}
\end{figure*}
(Lines 4--9) If no message is received from the central node by time $t_{j,N}$, robot $i$ executes a fallback maneuver using its previously committed trajectory $x_{j-1}^{i,\text{com}}$ and its return index $\text{ret}_{j-1}^i$ from the previous decision epoch. If $\text{ret}_{j-1}^i = 1$, the robot continues executing its last committed trajectory:
\eqn{
x_j^{i,\text{com}}(t) = x_{j-1}^{i,\text{com}}(t), \quad t \in [t_j,\ t_j + T_R]
}
If $\text{ret}_{j-1}^i > 1$, the robot remains idle (e.g., hovers) for $\text{ret}_{j-1}^i (T_\delta)$ seconds, and then begins executing a time-shifted version of its previous trajectory:
\eqn{
\begin{aligned}
    x_j^{i,\text{com}}(t) = x_{j-1}^{i,\text{com}}(t - \text{ret}_{j-1}^i T_\delta), \\
    t \in [t_j + \text{ret}_{j-1}^i T_\delta,\ t_j + \text{ret}_{j-1}^i T_\delta + T_R]
\end{aligned}
}
This fallback guarantees mutually exclusive access to the charging station and ensures energy-feasible operation even in the absence of centralized coordination.

\textbf{\textit{2) Fail-safe maneuver onboard mobile charging robot}}

To detect central node failures, the mobile charging robot monitors the $\text{mobcontinue}_j$ message. If this message is received by $t_{j-1, T_N}$, the robot continues executing its nominal trajectory. Otherwise, if the message is not received by the deadline, it halts the mission at time $t_{j,R}$.




\subsection{\Rmesch{} Theoretical Guarantees}\label{sec:RmeSch_guarantees}
This section provides the theoretical conditions under which \Rmesch{} guarantees feasibility, robustness, and adaptability. We begin by deriving an upper bound on the number of robots that can be supported at mission start based on the minimum remaining flight time. We then present a general feasibility theorem that guarantees all robots return safely with the required energy and time separation under both nominal conditions and central node failure. Finally, we discuss robustness by addressing two key scenarios: rechargeable robot failure and the addition of new robots during the mission, and provide conditions under which feasibility is preserved in both cases.

\subsubsection{Upper bound on number of robots}

\begin{lemma}
\label{lemma1}
At iteration $j = 0$, given the sorted list of remaining flight times $\{T_{F,0}^{1'}, T_{F,0}^{2'}, \dots\}$ where $T_{F,0}^{1'}$ is the minimum remaining flight time, the maximum number of robots that can be safely supported by the mission while satisfying all gap flags is
\eqn{
N^* = 1 + \left\lfloor \frac{T_{F,0}^{1'} - T_R - T_E}{T_\delta} \right\rfloor,
}
where $T_R$ is the time a rechargeable robot takes to reach the charging station, $T_E$ is the iteration interval, and $T_\delta$ is the minimum required gap between two consecutive returns.
\end{lemma}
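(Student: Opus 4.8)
The plan is to characterize $N^*$ as the largest integer $N$ for which the family of gap flags $G^{2},\dots,G^{N}$ generated by \gware{} (\cref{alg:gware}) can hold simultaneously at iteration $j=0$, and then to read off the closed form by solving a single scalar inequality. First I would spell out what ``all gap flags satisfied'' means: after sorting so that $T_{F,0}^{1'}\le T_{F,0}^{2'}\le\cdots\le T_{F,0}^{N'}$, each flag $G^{l}$ requires $T_{F,0}^{l'} > T_R + T_E + l\,T_\delta$, the common reference in every flag being robot $1'$, the one with least endurance. The key observation is that, since $1'$ is the first robot to return and has the smallest remaining flight time, it is the bottleneck of the return schedule: for all $N$ robots to be served sustainably by the single charger, robot $1'$ must stay aloft long enough for every other robot to claim its $T_\delta$ charging-plus-buffer window (the spacing imposed by the minimum-gap condition~\eqref{eq:gap_lower_bound}), in addition to the travel time $T_R$ and the decision buffer $T_E$ that $1'$ itself needs before returning.

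Concretely, I would argue that supporting $N$ robots forces the necessary condition $T_{F,0}^{1'} \ge T_R + T_E + (N-1)\,T_\delta$. This is the worst-case instance of the relevant gap flag — the one carrying the largest $T_\delta$-count in the \gware{} loop — evaluated with the corresponding $T_{F,0}^{l'}$ replaced by its lower bound $T_{F,0}^{1'}$; equivalently, it says the entire first round of $N-1$ staggered returns plus robot $1'$'s own return overhead must fit inside $1'$'s endurance window. Rearranging yields $N - 1 \le \big\lfloor (T_{F,0}^{1'} - T_R - T_E)/T_\delta\big\rfloor$, i.e.\ $N \le 1 + \big\lfloor (T_{F,0}^{1'} - T_R - T_E)/T_\delta\big\rfloor = N^{*}$, which is the claimed upper bound.

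For the converse — that $N = N^{*}$ robots can in fact be supported — I would exhibit the round-robin return schedule in which robot $l'$ returns roughly $(l-1)T_\delta$ after robot $1'$, and verify directly that every gap flag $G^{l}$ holds: by the sorting, $T_{F,0}^{l'}\ge T_{F,0}^{1'}$, and by the definition of $N^{*}$ we have $T_{F,0}^{1'} > T_R + T_E + (N^{*}-1)T_\delta \ge T_R + T_E + l\,T_\delta$ for the relevant indices $l$, so each flag's threshold is dominated. This shows $N^{*}$ is attained (generically) and is therefore exactly the maximum.

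The main obstacle I anticipate is the reduction asserted above: justifying rigorously that the conjunction of the $G^{l}$ collapses to a single inequality in $T_{F,0}^{1'}$. The difficulty is that $G^{l}$ is not obviously monotone in $l$ — the left side $T_{F,0}^{l'}$ grows along the sorted list while the threshold $T_R + T_E + l\,T_\delta$ grows too — so one cannot merely say ``the last flag is tightest''; the argument must instead invoke the scheduling semantics (robot $1'$ is the common reference with minimal slack, so the binding requirement is that the whole return sequence fit within $1'$'s endurance). A secondary, cosmetic issue is the interaction of the strict inequality with the floor: whether $N^{*}$ is hit with equality or is off by one depends on whether $(T_{F,0}^{1'} - T_R - T_E)/T_\delta$ is an integer, so I would state and use the result as the upper bound $N \le N^{*}$, noting it is tight for generic parameter values.
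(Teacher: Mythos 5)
Your proof follows essentially the same route as the paper's: both reduce the question to the single scalar inequality $T_R + T_E + (N-1)\,T_\delta \le T_{F,0}^{1'}$ --- the entire staggered return sequence must fit inside the endurance of the least-endowed robot --- and then rearrange and take the floor. One caution on your necessity step: substituting the lower bound $T_{F,0}^{1'}$ for $T_{F,0}^{l'}$ on the left of the strict inequality $G^{l}$ \emph{strengthens} the condition rather than following from it, so the single inequality is not a logical consequence of the flags holding; the paper avoids this by imposing the inequality directly as the physical safety requirement (even the last-returning robot must land before the minimum endurance expires), which is exactly the ``scheduling semantics'' justification you anticipated as the main obstacle. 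Your achievability direction --- exhibiting the round-robin schedule for $N = N^{*}$ and checking that sorting plus the definition of $N^{*}$ dominates every flag threshold --- is a genuine addition, since the paper proves only the upper bound and simply asserts that the floor gives ``conservativeness and integer feasibility''; likewise your remark about the strict inequality versus the floor at integer parameter values is a real edge case the paper glosses over.
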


\begin{proof} Please see Appendix A Proofs \cref{appen:lemma1_proof} for detailed proof. 
\end{proof}

\subsubsection{Feasibility guarantees}
 \begin{figure*}[t]
  \centering
  \includegraphics[width=2.1\columnwidth]{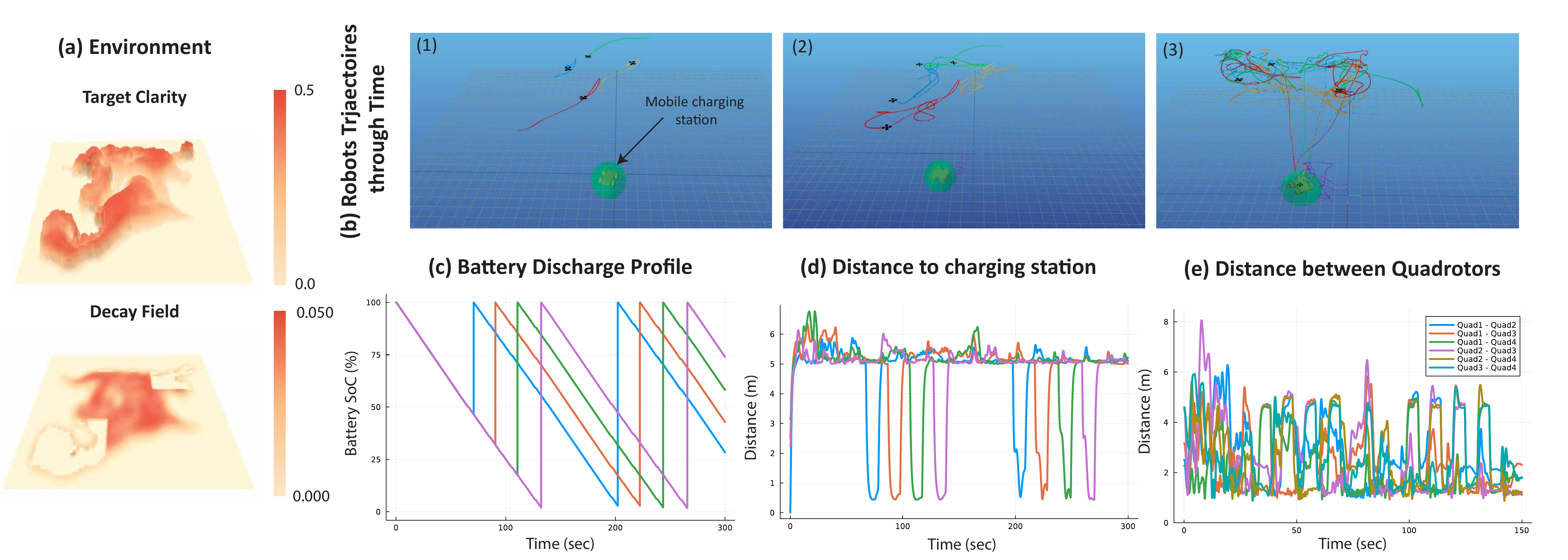}
  \caption{Demonstration of \mEclares{} through a case study: the rechargeable quadrotors track ergodic trajectories that are replanned every 30 seconds, while the mobile charging rover follows the geometric center of the nominal ergodic trajectories of all rechargeable robots.}
  \label{fig:meSch_Ergodic}
\end{figure*}

\begin{thm}
\label{thm1}
Given $|\Rcal|~\leq N^{*}$ derived in \Cref{lemma1}, suppose that at $j = 0$, the Gap flag condition \eqref{gap_flag} and the Reserve SoC condition \eqref{min_soc} are satisfied. Then, the minimum energy constraint \eqref{eq:min_energy_constraints} and the return gap condition \eqref{eq:gap_lower_bound} hold for all $t \in [t_0, t^i_{0, R})$.
For all subsequent iterations $j \geq 1$, if solutions for  \eqref{eq:landing_overall} and \eqref{eq:b2b_prob_overall} exist, and the committed trajectories $x_j^{i, \text{com}} = x^i([t_j, t^i_{j,C}]; t_j, x^i_j)$ are computed for all $i \in \Rcal$ using \cref{alg:mesch}, \cref{alg:onboard_quad} and \cref{alg:onboard_mobile}, then the conditions \eqref{eq:min_energy_constraints} and \eqref{eq:gap_lower_bound} are satisfied for all $t \in [t_j, t_{j-1,R})$ and for all $j \in \posintegers$.
\end{thm}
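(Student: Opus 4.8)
The plan is to prove Theorem~\ref{thm1} by induction on the \Rmesch{} iteration index $j$, with a case split at each iteration separating nominal operation (the central-node message arrives by $t_{j,N}$) from the fail-safe mode (no message arrives, so \cref{alg:onboard_quad} and \cref{alg:onboard_mobile} take over). For the base case $j=0$ I would argue directly from the two standing hypotheses: satisfaction of the Gap flag condition \eqref{gap_flag} for every $l \in \Rcal' \setminus \{1'\}$ means $T_{F,0}^{l} > T_R + T_E + l T_\delta$, and translating remaining flight times into predicted return times (each robot reaches the rendezvous $T_R$ after its commit) shows any two returns in $\Tcal$ are separated by more than $T_\delta$, so \eqref{eq:gap_lower_bound} holds on $[t_0, t^i_{0,R})$; meanwhile the Reserve SoC condition \eqref{min_soc} gives $e^i(t) > e^{i,\mathrm{res}}_0$ along the committed trajectory up to $t^i_{0,C}$, and since $e^{i,\mathrm{res}}_j$ is constructed to dominate $e^i_{\min}$ plus the worst-case uncertainty-aware landing cost (the optimal value of \eqref{eq:landing_overall}), \eqref{eq:min_energy_constraints} holds on the same window, with feasibility of the final landing supplied by Assumption~2.

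For the inductive step, assume the claim holds through iteration $j-1$, so $x^{i,\mathrm{com}}_{j-1}$ satisfies both \eqref{eq:min_energy_constraints} and \eqref{eq:gap_lower_bound} on $[t_{j-1}, t^i_{j-1,C}] \subseteq [t_{j-1}, t_{j-1,R}]$ for all $i$ and the stored return indices $\mathrm{ret}^i_{j-1}$ are consistent with the sorted list $\Rcal'$. In the nominal case I would follow \cref{alg:mesch}: if \gware{} reports \texttt{GapVio}$=1$, robot $1'$ retains $x^{1',\mathrm{com}}_{j-1}$ (feasible by the inductive hypothesis) and every other robot commits its candidate trajectory, and I would show that dropping robot $1'$'s rescheduling makes the remaining gap flags hold, keeping successive returns $>T_\delta$ apart on $[t_j, t_{j-1,R})$; if \texttt{GapVio}$=0$, the gap flags already certify \eqref{eq:gap_lower_bound}, and \eware{} commits each candidate trajectory whose Reserve SoC condition holds on $[t_j, t^i_{j,C}]$ and otherwise retains $x^{i,\mathrm{com}}_{j-1}$, so in either branch \eqref{eq:min_energy_constraints} persists. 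In the fail-safe case I would use \cref{alg:onboard_quad}: if $\mathrm{ret}^i_{j-1}=1$ the robot continues $x^{i,\mathrm{com}}_{j-1}$ over $[t_j, t_j+T_R]$, feasible by induction; if $\mathrm{ret}^i_{j-1}=l>1$ it hovers for $l T_\delta$ seconds and then tracks the time-shifted $x^{i,\mathrm{com}}_{j-1}$, and I would verify (i) the $lT_\delta$ offsets reproduce exactly the gap-flag spacing so every pair of returns stays $>T_\delta$ apart, and (ii) the hover discharge plus the shifted return still leaves $e^i(t)\ge e^i_{\min}$ — this is where $|\Rcal| \le N^*$ from \cref{lemma1} enters, since $N^*$ is defined so that even the minimum-flight-time robot can absorb the maximal $(N-1)T_\delta$ hover and still reach the station within its reserve margin. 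For the mobile charging robot, \cref{alg:onboard_mobile} halts it at $t_{j,R}$ when $\mathrm{mobcontinue}_j$ is missing, keeping the rendezvous point valid for returning robots; and since $t_j \le t^i_{j-1,C} \le t_{j-1,R}$, the guaranteed windows $[t_j, t_{j-1,R})$ chain across consecutive $j$ so their union covers $[t_0,\infty)$.

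The hardest part is the fail-safe bookkeeping: establishing simultaneously that the hover offsets prescribed by $\mathrm{ret}^i_{j-1}$ preserve the $T_\delta$ separation of all returns and that the extra hover discharge keeps every robot above $e^i_{\min}$, which must be closed using \cref{lemma1} together with the reserve-energy margin rather than either ingredient alone. A secondary subtlety is the \gware{} ``constraint-dropping'' argument — one must check that rescheduling only robot $1'$ (rather than a cascade of robots) suffices to restore all gap flags, which relies on the sorted order of $\Rcal'$ and the cumulative $l T_\delta$ structure of \eqref{gap_flag}.
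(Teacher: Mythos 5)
Your proposal is correct and follows essentially the same route as the paper's proof: induction on the iteration index with a base case from the $j=0$ hypotheses, a nominal-case analysis that mirrors the \gware{}/\eware{} commit-or-retain logic (the paper splits this into "all candidates committed" versus "robot $1'$ falls back to its previous committed trajectory"), and a separate fail-safe argument in which the stored return indices and $\mathrm{ret}^i_{j-1}T_\delta$ time shifts preserve both the gap and energy constraints. Your treatment is, if anything, slightly more explicit than the paper's about where $|\Rcal|\leq N^*$ and the reserve-energy margin enter the fail-safe bookkeeping, but the decomposition and key ideas are the same.
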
 
\begin{proof} Please see Appendix A Proofs \cref{appen:proof_theorem_1} for detailed proof. 
\end{proof}

\subsubsection{Robustness to rechargeable robot failures and additions}

\begin{remark}
\label{remark}
Given \Cref{lemma1}, \Cref{thm1} also applies in the case when a subset of robots in $\Rcal$ fail during the mission execution. If such failures are detected and the corresponding robots are excluded from future gap flag evaluations, then the minimum energy condition \eqref{eq:min_energy_constraints} and the minimum return gap condition \eqref{eq:gap_lower_bound} continue to hold for the remaining robots.
\end{remark}



\begin{remark}
\label{remark2}
Following from \cref{lemma1}, at any decision iteration $j$, a new robot can be safely added to the mission if the minimum remaining flight time satisfies
\eqn{
T_{F,j}^{1'} \geq T_R + T_E + (N_{\text{curr}}) \cdot T_\delta,
}
where $T_{F,j}^{1'}$ is the minimum remaining flight time at time $t_j$ and $N_{\text{curr}}$ is the number of robots currently in the mission. This condition ensures that the updated team remains within the admissible bound derived in \cref{lemma1} and all robots satisfy the return gap \eqref{eq:gap_lower_bound} and energy \eqref{eq:min_energy_constraints} constraints.
\end{remark}


\section{Results \& Discussion}
\label{sec:results}

 \begin{figure*}[t]
  \centering
  \includegraphics[width=2.1\columnwidth]{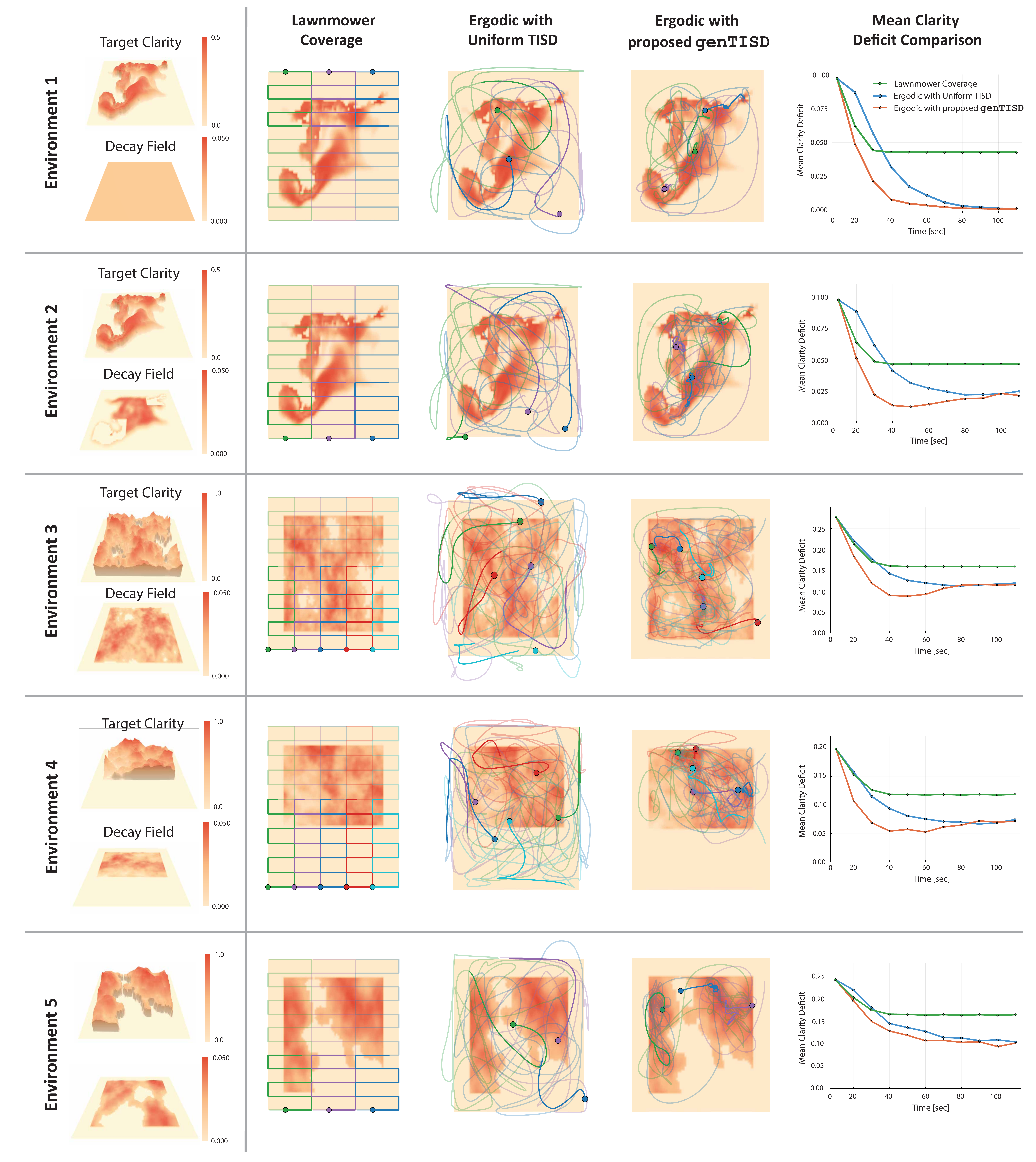}              
  \caption{Comparison of mean clarity deficit over time across five synthetic environments. }
\label{fig:mClares_results}
\end{figure*}

In this section, we evaluate \mEclares{} through case studies, baseline comparisons, and hardware experiments. We use quadrotors with 3D nonlinear dynamics from \cite[Eq. (10)]{jackson2021planning} as rechargeable robots and rovers with unicycle models as mobile charging robots. We assume instantaneous recharging ($T_{ch} = 0.0$ s) and a buffer time of $T_{bf} = 15.0$ s, with $T_N = 2.0$ s and $T_R = 18.0$ s, consistent across all experiments.

To generate b2b trajectories, we solve \eqref{eq:b2b_prob_overall} using MPC with the reduced linear quadrotor dynamics from \cite{jackson2021planning}. We use an LQR controller for \eqref{eq:landing_overall} and an LQG controller for landing. Trajectories are generated at 1.0 Hz and tracked at 50.0 Hz with zero-order hold, using the RK4 integration.

\subsection{\textbf{Multi-Agent Energy-Aware Persistent Ergodic Search}}

We evaluate \Rmesch{} by simulating a scenario in which four quadrotors and one rover explore a $10 \times 10$ m domain. The nominal trajectories are collision-free, ergodic paths with a horizon of $T_H = 30.0$ s. All quadrotors follow discharge dynamics given by $\Dot{e} = -0.667$. \Cref{fig:meSch_Ergodic} (a) shows the target clarity distribution and the decay field of the test environment, where the decay field corresponds to the $Q$ values across the domain, as defined in \eqref{multi_clarity_dynamics}. \Cref{fig:meSch_Ergodic} (b) presents still frames from the lightweight UAV simulator, where four quadrotors explore a stochastic spatiotemporal environment. The mobile charging rover tracks the geometric center of the four rechargeable quadrotors.

\Cref{fig:meSch_Ergodic} (c) illustrates the battery discharge profiles of the quadrotors, while \Cref{fig:meSch_Ergodic} (d) shows the distance of each quadrotor to the charging station over time. The results indicate that the quadrotors maintain the minimum required gap between successive visits to the charging station. Collision avoidance is implemented using a potential field method, which generates artificial repulsive forces to steer robots away from each other in real time. To ensure that no more than two quadrotors are on charging-related paths at the same time, the $T_\delta$ parameter is set such that a quadrotor returning from the charging station has sufficient time to rejoin the mission before another begins its return to the charger. Finally, \Cref{fig:meSch_Ergodic} (e) shows the inter-quadrotor distances, confirming that no collisions occur during the mission.
\begin{table*}[h]
    \footnotesize
    \centering
    \caption{Comparison of baseline methods and proposed \Rmesch{}}
    \renewcommand{\arraystretch}{1.3} 
    \begin{tabular}{>{\centering\arraybackslash}p{1.76cm}>{\centering\arraybackslash}p{1.6cm}>{\centering\arraybackslash}p{1.0cm}>{\centering\arraybackslash}p{1.0cm}>{\centering\arraybackslash}p{1.0cm}>{\centering\arraybackslash}p{1.4cm}>{\centering\arraybackslash}p{1.7cm}>{\centering\arraybackslash}p{1.4cm}>{\centering\arraybackslash}p{1.4cm}}
        \hline
        \textbf{Method} & \textbf{Robot Model (Supports Nonlinear Dynamics)} & \textbf{Total Recharging Visits} & \textbf{Gap Violations} & \textbf{Min Energy Violations} & \textbf{Scalability Analysis} & \textbf{Staggered Deployment} & \textbf{Mobile Charging} & \textbf{Central Node failure}\\ 
        \hline
        Baseline 1  & SI (\textcolor{red}{No}) & 8 & 0 & 0 & Not provided & \cellcolor{mygreen}No & \cellcolor{myred}No & \cellcolor{myred}No\\ 
        \hline
        Baseline 2 & Quadrotor (Yes) & 8 & 0 & 0 & Not provided & \cellcolor{myred}Yes & \cellcolor{myred}No & \cellcolor{myred}No\\ 
        \hline
        Baseline 3 & Quadrotor  (Yes) & 8 & \textcolor{red}{2} & 0 & Not provided & \cellcolor{mygreen}No & \cellcolor{myred}No & \cellcolor{myred}No\\ 
        \hline
        Baseline 4 [\mesch{} with only \gware] & Quadrotor (Yes) & 4 & 0 & \textcolor{red}{4} & $\mathcal{O}(N \log N)$ & \cellcolor{mygreen}No & \cellcolor{mygreen}Yes & \cellcolor{myred}No\\ 
        \hline
        Baseline 5 [\mesch{}] & Quadrotor (Yes) & 8 & 0 & 0 & $\mathcal{O}(N \log N)$ & \cellcolor{mygreen}No & \cellcolor{mygreen}Yes & \cellcolor{myred}No\\ 
        \hline
        \textbf{Proposed [\Rmesch{}]} & \textbf{Quadrotor (Yes)} & 8 & 0 & 0 & $\mathcal{O}(N \log N)$ & \cellcolor{mygreen}No & \cellcolor{mygreen}Yes & \cellcolor{mygreen}Yes\\ 
        \hline
    \end{tabular}
    \label{tab:meSch_comparison}
\end{table*}

\subsection{\textbf{Multi-agent Clarity-driven ergodic planner performance comparison to baseline methods}} 
We compare the performance of the proposed method \genTISD{} against two baseline strategies. The first baseline is a lawnmower coverage path~\cite{choset1998coverage}, and the second is the standard uniform TISD used in most ergodic literature~\cite{dong2023time, Mezic_Ergodic}.

Performance is evaluated across five different synthetic environments. Each environment specifies a target clarity distribution and a decay field, where the decay field represents the value of $Q$ across the domain, as defined in \eqref{multi_clarity_dynamics}.

The results, summarized in Figure~\ref{fig:mClares_results}, show that the proposed \genTISD{} consistently achieves a lower mean clarity deficit compared to both baseline methods across all environments. Specifically, the lawnmower coverage strategy exhibits relatively poor performance, especially in environments with nonuniform decay, as it does not adapt to spatial variations in target clarity or information decay rates. The ergodic control using uniform TISD improves over lawnmower coverage by distributing effort more evenly; however, it still fails to prioritize regions according to their target clarity. In contrast, \genTISD{} dynamically allocates exploration effort toward regions with faster clarity decay or higher target clarity, resulting in more efficient information acquisition and lower overall clarity deficits over time.

It is important to note that in Environments 2 through 5, the mean clarity deficit does not converge to zero. This behavior is expected because the decay field $Q$ is non-zero in these environments, causing the target clarity to continuously degrade over time. As a result, it is not possible to achieve perfect target clarity even under optimal exploration. In contrast, Environment 1 has a decay field with $Q=0$ everywhere, allowing the robots to eventually drive the mean clarity deficit to zero through persistent exploration.
\begin{figure} [t]
    
  \centering
  \includegraphics[width=1.0\columnwidth]{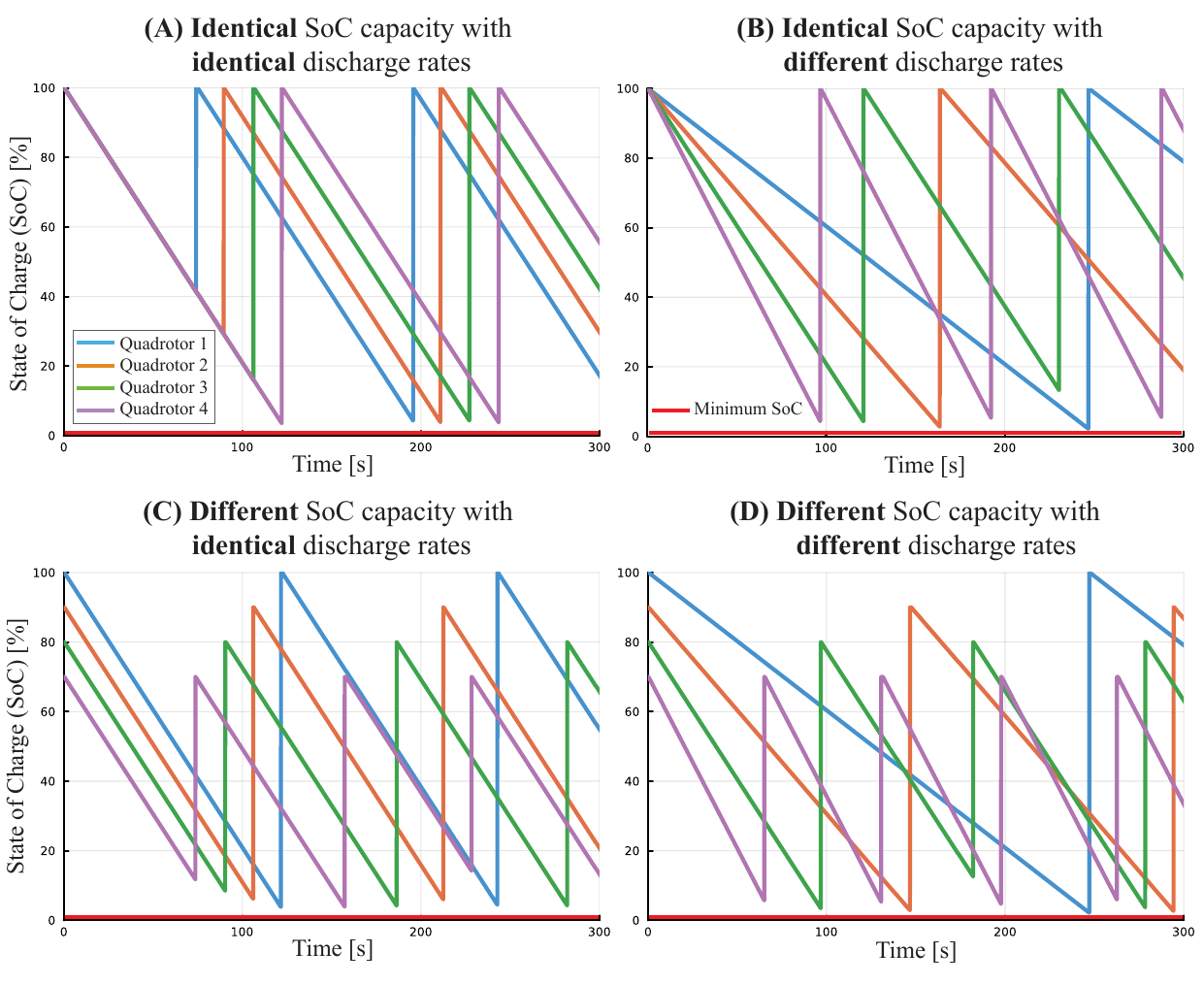}
  \caption{These plots show results for the scenarios when four quadrotors have different SoC capacities and different discharge rates. The plots validate that quadrotors always maintain the minimum of $(T_\delta)$ gap while visiting the charging station.}
  \label{fig:SoC_Dis_Plots}
\end{figure}

\subsection{\textbf{\Rmesch{} performance comparison to baseline methods}}

We compare \Rmesch{} to baseline methods using eight metrics, as shown in \cref{tab:meSch_comparison}. For each method, four robots are used with the same discharge model, $\Dot{e} = -0.667$. The total recharging visits are the same across all methods, except for Baseline 4, which focuses only on the timing of robot visits and does not account for the minimum energy requirements.

Compared to Baseline 1 (\cite{persis_Fouad}), \Rmesch{} supports nonlinear dynamic models, making it more applicable to real-world robotic platforms, as demonstrated with 3D quadrotor dynamics \cite{jackson2021planning}. Unlike Baseline 2 (\cite{bentz2018complete}), \mesch{} effectively handles both identical and varying discharge rates and state-of-charge (SoC) capacities without requiring robots to be deployed at different times. Deploying robots at different times reduces the number of robots available for the mission at any given moment, limiting overall efficiency. By allowing all robots to be deployed simultaneously, \Rmesch{} simplifies mission planning and increases adaptability to different discharge patterns, as shown in \cref{fig:SoC_Dis_Plots} with four quadrotors. Compared to Baseline 3 (\cite{naveed2023eclares}), \Rmesch{} eliminates simultaneous charging station visits. In Baseline 3, four robots returned concurrently on two occasions, leading to a violation of \eqref{eq:gap_lower_bound}. While Baseline 4, which only includes the \gware{} module from \mesch{} (\Rmesch{} without fail-safe planner), successfully avoids overlapping visits, it fails to enforce minimum energy constraints, resulting in a violation of \eqref{eq:min_energy_constraints}. Finally, none of the baseline methods support mobile charging stations—a limitation in environments where fixed charging locations may be infeasible. They also lack safe recovery maneuvers in the event of a central node failure. By addressing these gaps, \Rmesch{} enhances mission endurance and scalability while providing provable safety and feasibility guarantees.

\begin{figure} [t]
    
  \centering
  \includegraphics[width=1.0\columnwidth]{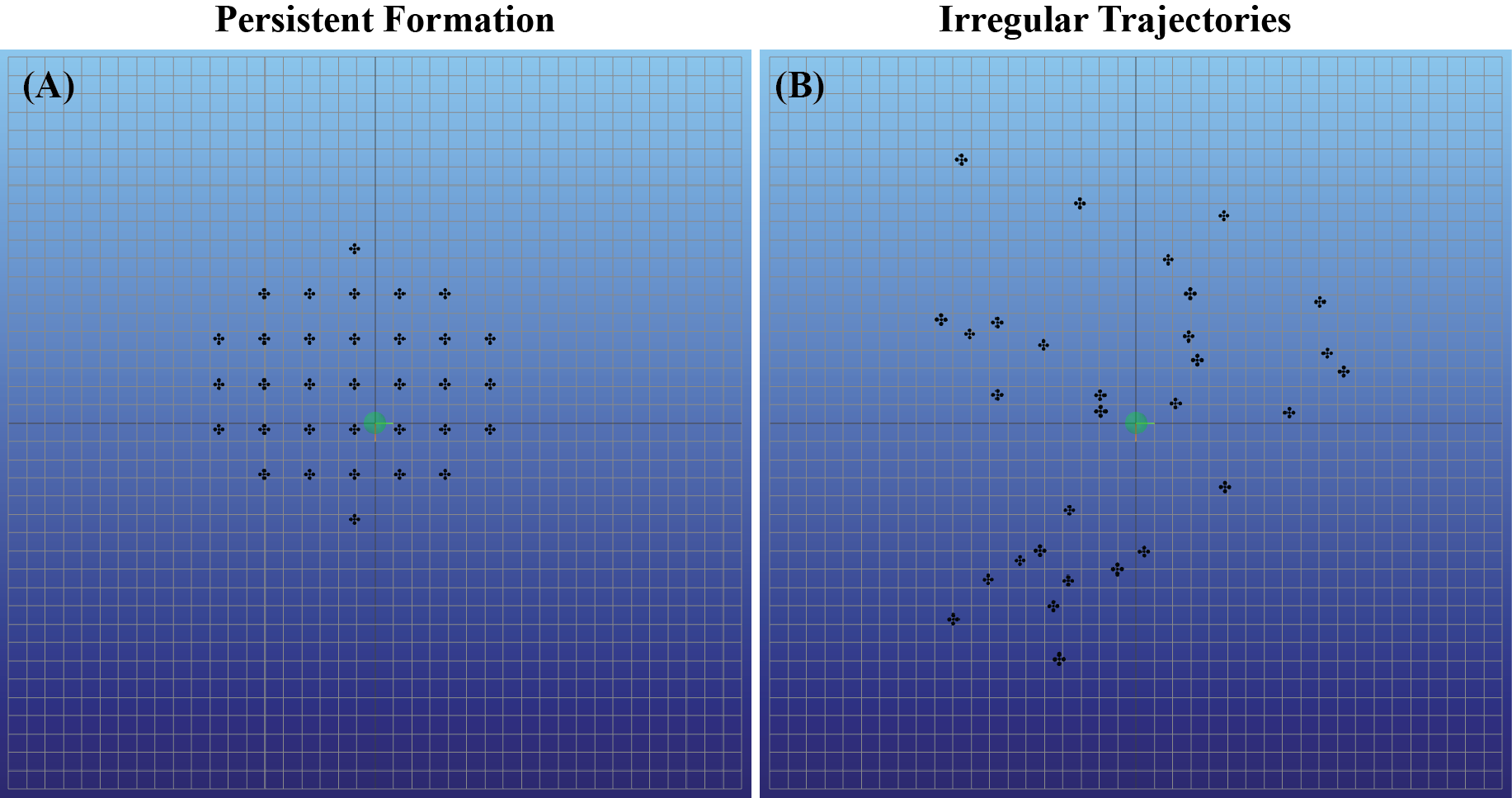}
  \caption{\Rmesch{} scalability is demonstrated through two simulation case studies: (A) 30 quadrotors performing a persistent mission, and (B) 30 quadrotors following irregular trajectories.}
  \label{fig:Scale_figure}
\end{figure}
 \begin{figure*}[t]
  \centering
  \includegraphics[width=2.1\columnwidth]{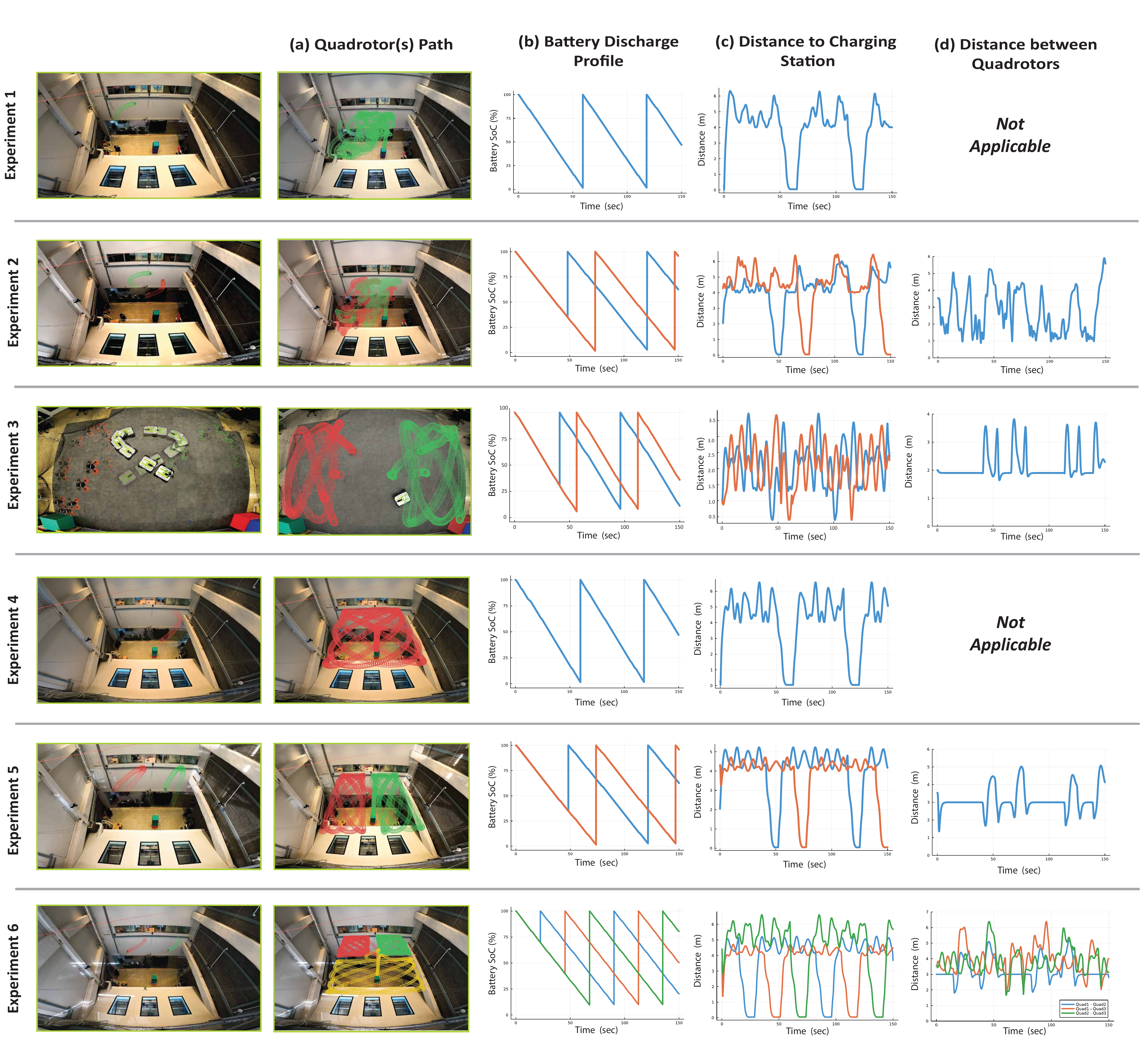}
  \caption{Demonstration of \mEclares{} and \Rmesch{} on hardware}
  \label{fig:Hardware_Experiments}
\end{figure*}
\subsubsection{\textbf{Computational efficiency and scalability}}

Distributing computation across the robot network improves the efficiency of the \Rmesch{} module. The main overhead comes from generating candidate trajectories, with solving \eqref{eq:b2b_prob_overall} and integrating the system’s nonlinear dynamics taking 150 ms and 30 ms on average, respectively. We employ the communication architecture(s) shown in
\cref{fig:comms}. 

To support real-time applications, each rechargeable robot (e.g. Quad 1) generates candidate trajectories on board, which are transmitted to the central node (Base) for scheduling. The scheduling algorithm has time complexity $\mathcal{O}(N \log N)$, mainly due to the sorting function in line 2 of \cref{alg:gware}. Thus, the method scales with $\mathcal{O}(N \log N)$, where $N$ is the number of rechargeable robots. To demonstrate scalability, we evaluate the method with 30 rechargeable quadrotors as shown in \cref{fig:Scale_figure}. In these simulations, quadrotors return with $(3\pm1) \%$ battery SoC remaining.

\subsubsection{\textbf{Hardware Demonstration}} 

We validate \mEclares{} through a set of real-world hardware experiments involving rechargeable quadrotors and a mobile charging rover. Each quadrotor runs onboard computation on an NVIDIA Orin NX, while the rover uses a Raspberry Pi. The communication architecture used in these experiments is shown in \cref{fig:comms}. All experiments were conducted in the FlyLab facility at Michigan Robotics—a three-floor indoor arena equipped with 15 Vicon cameras for high-precision state estimation.

In all experiments, only the next $T_N = 2.0$ seconds of the nominal trajectory is provided by the high-level planner. Candidate trajectories are generated onboard each quadrotor and transmitted to the base station computer, which verifies gap flags and minimum state-of-charge (SoC) conditions. The rover (when mobile charging is used) continuously publishes its own state and nominal trajectory to support trajectory generation. The experiments highlight three key aspects of our framework:
\begin{itemize}
\item the ability to generate ergodic trajectories online in real time,
\item the ability to generate candidate trajectories onboard each quadrotor at 1.0 Hz using only 2.0 seconds of the available nominal trajectory and validate them at a central node, and
\item the modularity of \Rmesch{}, which functions as a low-level scheduling module that remains effective even when the high-level planner is replaced with a non-ergodic coverage strategy.
\end{itemize}

Experiments are summarized in \cref{fig:Hardware_Experiments}. In the first set of experiments (Experiments 1–2), the quadrotors track ergodic trajectories that are replanned every 30 seconds. These trials validate that ergodic exploration and energy-aware scheduling can operate in tandem under real-world conditions. The target clarity for this set of experiments corresponds to Environment 2 in~\cref{fig:mClares_results}, where the quadrotors are observed to spend more time in regions with higher clarity deficit.

In Experiment 3, we demonstrate the use of a mobile charging station. We also show that the charging rover’s path can be changed to a Lissajous curve, and the framework still functions correctly—highlighting the flexibility of the \mEclares{} design.

Experiments 4–6 evaluate \Rmesch{} under a non-ergodic high-level planner. In these experiments, the quadrotors follow Lissajous coverage trajectories. Candidate trajectories are generated onboard every second and validated at the central node. \Rmesch{} continues to ensure safe and effective scheduling under this design.

Collision avoidance is implemented in all experiments using a potential field method, which generates artificial repulsive forces in real time to prevent inter-robot collisions.

\Cref{fig:Hardware_Experiments} (a) shows the coverage paths followed by the quadrotors. \cref{fig:Hardware_Experiments} (b) and (c) present the battery discharge profiles and distances to the charging station, respectively, confirming that robots never violate the minimum energy requirement (which is zero) and consistently satisfy the minimum desired gap requirement between charging returns. Finally, \cref{fig:Hardware_Experiments} (d) confirms that no collisions occur during the experiments.

 Our implementation also accounts for delays introduced by computational overhead and ROS2 message latency. The primary sources of delay include candidate trajectory generation and forward propagation ($T_1$), gap flag construction and verification ($T_2$), and message latency in ROS2 ($T_3$). As long as $T_1 + T_2 + T_3 < T_E$, where $T_E$ is the \Rmesch{} decision interval, the mission proceeds as intended. If these delays exceed the worst-case allowed duration, a fail-safe maneuver is triggered, prompting the quadrotors to return safely to the charging station. In our three-quadrotor experiments, we observed a latency of $600 \pm 150$ ms, with $T_E$ set to 1.5 s.

All simulation and experimental code is publicly released. \Rmesch{} is available as a Julia module that functions as a low-level filter for any high-level planner. We also provide a Python-ROS2 wrapper for Julia, a Docker container for easy deployment, and our in-house-developed Orin-based DevQuad platform~\cite{gatekeeper}.

\section{Conclusion}
This paper presented \mEclares{}, a unified framework for adaptive ergodic exploration and robust energy-aware scheduling in persistent multi-robot missions. We addressed two key challenges in long-term autonomous operations: (i) planning informative trajectories in stochastic spatiotemporal environments, and (ii) coordinating energy-constrained robots through a shared mobile charging station. By modeling information decay using the clarity metric and integrating it into ergodic search, we enabled the construction of time-evolving target information distributions that guide exploration under uncertainty. To ensure task persistence, we introduced \Rmesch{}, an online scheduling algorithm that guarantees mutually exclusive access to the charging station and provides robustness to communication delays and central node failures via fail-safe coordination.

Our approach supports general nonlinear dynamics, handles uncertain charging station state, and scales to teams of robots. Through extensive simulations and real-world hardware experiments, we demonstrated the effectiveness of \mEclares{} in maintaining persistent, informative coverage while adhering to energy and safety constraints. Theoretical guarantees further support the feasibility and robustness of our method under well-defined conditions.

Future work will explore extensions to fully decentralized scheduling under communication constraints, integration with online learning of environmental dynamics, and deployment in larger-scale, real-world missions with diverse robotic platforms.

\section{Acknowledgments}
The authors would like to acknowledge the support of the National Science Foundation (NSF) under grant no. 2223845 and grant no. 1942907. Moreover, the authors would like to acknowledge Haejoon Lee and Manveer Singh’s assistance with conducting the experiments.

\bibliographystyle{apacite}
\bibliography{biblio}

\begin{appendices}
\section{Proofs}\label{Appendix_proof}

\subsection{Proof of \Cref{lemma1}}\label{appen:lemma1_proof}

\begin{proof}
To ensure safe and sequential return of all $N^*$ robots, the algorithm requires that each return is separated by at least $T_\delta$ seconds. The robot with the smallest remaining flight time, $T_{F,0}^{1'}$, is assumed to return first. Each subsequent robot must return with a delay of at least $T_\delta$ from the previous one.

Therefore, the last robot (i.e., the $N^*$-th robot) must complete its return no later than
\eqn{
T_R + (N^* - 1) \cdot T_\delta + T_e,
}
where $T_R$ accounts for the time required to return, and $T_E$ accounts for a one-iteration delay before the return command can be issued.

To guarantee that even the last robot returns safely before depleting its energy, this total return time must be less than or equal to the smallest available remaining flight time:
\eqn{
T_R + (N^* - 1) \cdot T_\delta + T_E \leq T_{F,0}^{1'}.
}
Rearranging the inequality:
\eqn{
(N^* - 1) \cdot T_\delta \leq T_{F,0}^{1'} - T_R - T_E,
}
\eqn{
N^* - 1 \leq \frac{T_{F,0}^{1'} - T_R - T_E}{T_\delta},
}
\eqn{
N^* \leq 1 + \frac{T_{F,0}^{1'} - T_R - T_E}{T_\delta}.
}
Taking the floor on the right-hand side ensures conservativeness and integer feasibility:
\eqn{
N^* = 1 + \left\lfloor \frac{T_{F,0}^{1'} - T_R - T_E}{T_\delta} \right\rfloor.
}
\end{proof}

\subsection{Proof of \Cref{thm1}}
\label{appen:proof_theorem_1}
\begin{proof} We complete this proof considering two scenarios: In the first scenario, we prove recursive feasibility without central node failure. In the second scenario, we show that when the central node fails, the robots can be safely recovered while still respecting the constraints \eqref{eq:min_energy_constraints} and \eqref{eq:gap_lower_bound}.

\subsubsection{Feasibility guarantee without central node failure}

The proof, inspired by \cite[Thm. 1]{gatekeeper}, uses induction. 
\subsubsection*{Base Case} At the time $t_1$ and iteration $j = 1$, since both Gap flag condition \eqref{gap_flag} and the Reserve SoC condition \eqref{min_soc} are true, the candidate trajectories are committed for all rechargeable robots i.e. $\forall i \in \Rcal$ and $\forall t^{i_1}_{m_1},t^{i_2}_{m_2}  \in \Tcal$
\begin{subequations}
    \eqnN{
&x_{1}^{i, com}(t) \gets x_{1}^{i, can}(t)  \quad \forall t \in [t_1, t^i_{1,C})\\
&\implies \begin{cases}
    T_{F,1}^{k} > (T_R + T_{E} + kT_{\delta}) \quad \forall k \in \Rcal'\\
    e^i(t) > e^{res}_{1} \quad \forall t \in [t_{1}, t^i_{1, C})
\end{cases} \\
&\implies \begin{cases}
    |t^{i_1}_{m_1} - t^{i_2}_{m_2}| > T_{\delta}  &\forall t \in [t_{1}, t_{1, R} )   \\
    e^i(t) > e_{min}^i \quad &\forall t \in [t_{1}, t_{1, R} )
\end{cases} 
}
\end{subequations}

Since $t_{1, R} > t_{0, R} > t^i_{0, C} \forall i \in \Rcal$, the claim holds.
\subsubsection*{Induction Step} Suppose the claim is true for some $j \in \posintegers$. We show that the claim is true for $j + 1$. 

\subsubsection*{Case 1} When candidate trajectories for all rechargeable robots are valid, i.e. $\forall i \in \Rcal$ and and $\forall t^{i_1}_{m_1},t^{i_2}_{m_2}  \in \Tcal$
\begin{subequations}
    \eqnN{
&x_{j+1}^{i, com}(t) \gets x_{j+1}^{i, can}(t) \quad \forall t \in [t_{j+1}, t^i_{j+1,C})\\
&\implies \begin{cases}
    T_{F,j+1}^{k} > (T_R + T_{E} + kT_{\delta}) \quad \forall k \in \Rcal'\\
    e^i(t) > e^{res}_{j+1} \quad \forall t \in [t_{j+1}, t^i_{j+1, C})
\end{cases} \\
&\implies \begin{cases}
    |t^{i_1}_{m_1} - t^{i_2}_{m_2}| > T_{\delta}  &\forall t \in [t_{j+1}, t_{j+1, R} )  \\
    e^i(t) > e_{min}^i \quad &\forall t \in [t_{j+1}, t_{j+1, R} )
\end{cases} 
}
\end{subequations}
Since $ t_{j+1, R} >  t_{j, R}, \forall i \in \Rcal$  the claim holds.
\subsubsection*{Case 2} This case corresponds to the scenario when the $1^{'th}$ robot in $\Rcal'$ returns either due to violation of Gap flag condition or the Reserve SoC condition, i.e.,
\eqnN{
x_{j+1}^{1', com}(t) \gets x_{j}^{1', com}(t) \quad \forall t \in [t_{j+1}, t^{1'}_{j,C}).}
The candidate trajectories are committed for the remaining robots, i.e. $\forall k \in \Rcal^'\backslash\{1'\}$ and $\forall t^{i_1}_{m_1},t^{i_2}_{m_2}  \in \Tcal$
\begin{subequations}
    \eqnN{
&x_{j+1}^{k, com} \gets x_{j+1}^{k, can} \quad  \forall t \in [t_{j+1}, t^k_{j+1,C})\\
&\implies \begin{cases}
    T_{F,j+1}^{k} > (T_R + T_{E} + kT_{\delta}) \\
    e^k(t) > e^{res}_{j+1} \quad \forall t \in [t_{j+1}, t^k_{j+1, C})
\end{cases} \\
&\implies \begin{cases}
    |t^{i_1}_{m_1} - t^{i_2}_{m_2}| > T_{\delta}  &\forall t \in [t_{j+1}, t_{j+1, R} )  \\
    e^k(t) > e_{min}^k \quad &\forall t \in [t_{j+1}, t_{j+1, R} )
\end{cases} 
}
\end{subequations}

Since $t_{j+1, R} > t^k_{j, C}$,  the claim holds.

\subsubsection{Safe recovery and feasibility guarantee with central node failure}

Suppose that at time $t_j$, all rechargeable robots generate new candidate trajectories and send requests to the central node for validation. Each robot then waits for a response until $t_{j-1,N}$, as specified by the fail-safe protocol. If no message is received from the central node by this deadline, each robot executes its onboard fail-safe maneuver using its previously committed trajectory $x_{j-1}^{i,\text{com}}$ and its stored return index $\text{ret}_{j-1}^i$.

\begin{itemize}
    \item If $\text{ret}_{j-1}^i = 1$, the robot immediately continues following $x_{j-1}^{i,\text{com}}$, ensuring a return by $t_j + T_R$.
    \item If $\text{ret}_{j-1}^i > 1$, the robot idles for $\text{ret}_{j-1}^i T_\delta$ seconds and then executes a time-shifted version of $x_{j-1}^{i,\text{com}}$ over the interval $[t_j + \text{ret}_{j-1}^i T_\delta,\, t_j + \text{ret}_{j-1}^i T_\delta + T_R]$.
\end{itemize}

This structure ensures two properties:
\begin{enumerate}
    \item \textbf{Gap constraint satisfaction:} The time-shifting mechanism guarantees that no two robots attempt to return simultaneously. Since each robot delays its return by $(\text{ret}_{j-1}^i - 1) \cdot T_\delta$, mutual exclusion at the charging station is preserved, and the gap condition \eqref{eq:gap_lower_bound} holds.
    \item \textbf{Minimum energy constraint satisfaction:} Since each robot had already committed a feasible trajectory at $t_{j-1}$ with enough energy to return after the assigned delay, the energy constraint \eqref{eq:min_energy_constraints} remains satisfied.
\end{enumerate}

Thus, even in the absence of centralized coordination, the robots return safely, respecting both the return gap and minimum energy requirements. Hence, recursive feasibility also holds under central node failure.
\end{proof}

\end{appendices}

\end{document}